
\documentclass[10pt]{siamonline190516}
\usepackage{xcolor}
\usepackage{graphicx}    
\usepackage{tikz}
\usetikzlibrary{bayesnet}
\usetikzlibrary{arrows}
\usepackage{hyperref}
\usepackage{enumitem}
\usepackage{comment}
\usepackage{amssymb}
\theoremstyle{plain}
\renewtheorem{theorem}{Theorem}
\newtheorem*{theoremLoukas}{Theorem 4.2 in \cite{loukas2017close}}
\newtheorem*{corLoukas}{Corollary 4.2 in \cite{loukas2017close}}
\newtheorem*{cor41}{Corollary 4.1 in \cite{loukas2017close}}
\newtheorem*{eqn13}{Equation 1.3 in \cite{koltchinskii2017normal}}
\newtheorem*{eqn14}{Equation 1.4 in \cite{koltchinskii2017normal}}
\newtheorem*{eqn15}{Equation 1.5 in \cite{koltchinskii2017normal}}
\newtheorem*{cor113}{Corollary 1.12 in \cite{wei2017upper}}
\newtheorem*{theorem9}{Theorem 9 in \cite{koltchinskii2017concentration}}
\renewtheorem{lemma}[theorem]{Lemma}
\newtheorem{cor}[theorem]{Corollary}
\newtheorem{prop}[theorem]{Proposition}
\newtheorem{defn}[theorem]{Definition}

\newcommand{\PCA}{\mathrm{PCA}}
\newcommand{\OLS}{\mathrm{OLS}}
\newcommand{\norm}[2][]{{\left\Vert #2 \right\Vert}_{#1}}
\newcommand{\tr}[1]{#1^{\top}}

\newcommand{\given}{\,|\,}
\DeclareMathOperator*{\argmin}{argmin}

\usepackage{comment}

\newcommand{\R}{\mathbb{R}}

\definecolor{red1}{cmyk}{0,1,.8,0}
\definecolor{header1}{cmyk}{.9,.5,0,.35}
\definecolor{blue1}{cmyk}{.9,.7,0,0}
\definecolor{blue2}{cmyk}{.93,.95,.2,.07}

\renewcommand{\paragraph}[1]{\par\medskip\noindent\textbf{#1}}
\frenchspacing\raggedbottom\sloppy\sloppypar
\addtolength{\topmargin}{-0.35in}
\addtolength{\textheight}{0.75in}
\setlength{\headheight}{2ex}
\setlength{\headsep}{2ex}
\pagestyle{myheadings}
\markright{\textsf{\color{gray}%
Dimensionality reduction and regularization ---
Huang, Hogg, \& Villar}\hfill}

\newcommand\blfootnote[1]{%
  \begingroup
  \renewcommand\thefootnote{}\footnote{#1}%
  \addtocounter{footnote}{-1}%
  \endgroup
}

\usepackage{color}

\newcommand{\beginsupplement}{%
        \setcounter{table}{0}
        \renewcommand{\thetable}{S\arabic{table}}%
        \setcounter{figure}{0}
        \renewcommand{\thefigure}{S\arabic{figure}}%
     }

\title{ 
Dimensionality reduction, regularization, and generalization in overparameterized regressions 
}

\author{Ningyuan (Teresa) Huang\thanks{Department of Applied Mathematics and Statistics, Johns Hopkins University, and Mathematical Institute for Data Science, Johns Hopkins University.}
\and David W. Hogg\thanks{Flatiron Institute, a division of the Simons Foundation, and Center for Cosmology and Particle Physics, Department of Physics, New York University.}
\and Soledad Villar\footnotemark[1]}



\headers{
Dimensionality reduction and regularization
}{Huang, Hogg, \& Villar}

\begin{document}
\maketitle

\begin{abstract}
Overparameterization in deep learning is powerful: Very large models fit the training data perfectly and yet often generalize well. This realization brought back the study of linear models for regression, including ordinary least squares (OLS), which, like deep learning, shows a ``double-descent'' behavior: (1)~The risk (expected out-of-sample prediction error) can grow arbitrarily when the number of parameters $p$ approaches the number of samples $n$, and (2)~the risk decreases with $p$ for $p>n$, sometimes achieving a lower value than the lowest risk for $p<n$. The divergence of the risk for OLS can be avoided with regularization.  In this work, we show that for some data models it can also be avoided with a PCA-based dimensionality reduction (PCA-OLS, also known as principal component regression). We provide non-asymptotic bounds for the risk of PCA-OLS by considering the alignments of the population and empirical principal components. We show that dimensionality reduction improves robustness while OLS is arbitrarily susceptible to adversarial attacks, particularly in the overparameterized regime. We compare PCA-OLS theoretically and empirically with a wide range of projection-based methods, including random projections, partial least squares (PLS), and certain classes of linear two-layer neural networks. These comparisons are made for different data generation models to assess the sensitivity to signal-to-noise and the alignment of regression coefficients with the features. We find that methods in which the projection depends on the training data can outperform methods where the projections are chosen independently of the training data, even those with oracle knowledge of population quantities, another seemingly paradoxical phenomenon that has been identified previously.  This suggests that overparameterization may not be necessary for good generalization.
\end{abstract}

\blfootnote{Correspondence should be addressed to \textcolor{header1}{soledad.villar@jhu.edu}}

\section{Overparameterization and robustness in regression} \label{sec:intro}

One of the most remarkable properties of contemporary machine-learning methods---and especially deep learning---is that models with enormous capacity nonetheless generalize well. Overparameterized models have the flexibility to perfectly fit any training data, but (in many cases) still make good, non-trivial predictions on held-out or test data. Those good predictions contradict both our folklore and our intuitions. 

The realization that overparameterization is good for machine learning led to a reconsideration of classical linear regressions. It turns out that even linear regressions can generalize well in the overparameterized regime; that is, when the number of parameters $p$ far exceeds the number of training data points $n$ (provided that the fitting is performed in a min-norm or regularized way that limits the coefficients in the unconstrained $(p-n)$-dimensional subspace).
Both linear regressions and more complex machine-learning methods typically show a ``double-descent'' phenomenon, recently identified by Belkin et al. \cite{belkin2019reconciling}: (1) The underparameterized and overparameterized regimes are separated by a ``peaking'' phenomenon \cite{jain198239}, or ``jamming peak'' \cite{geiger2019jamming, spigler2018jamming}, in which the ``risk''---the expected out-of-sample prediction error---blows up when the model capacity just reaches overfitting (at $p=n$ in the linear case); (2) The risk further decreases with the number of parameters in the overparameterized regime, sometimes achieving a lower value than the underparameterized regime. 

The double-descent behavior raises the following research questions: Can we avoid the peaking phenomenon (RQ1)? Is overparameterization necessary for good generalization (RQ2)? The understanding of this phenomenon in the context of linear regression has provided some perspective on deep learning. An important result common to both deep learning and linear regression shows that regularization is important in or near the overparameterized regime \cite{hastie2020surprises, d2020triple, ju2020overfitting, xie2020weighted, li2020provable,nakkiran2020optimal, liu2020kernel, canatar2020spectral,wilson2020bayesian, hogg2021fitting}.

To address RQ1, we study double-descent in the context of linear regression, where the peaking phenomenon has a simple explanation that involves the (equivalent of the) condition number of the training features (the ratio of the largest singular value to the smallest). Prior work shows that the peaking phenomenon disappears with regularization using ridge penalty~\cite{hastie2020surprises}. In this work, we show that it also disappears with dimensionality reduction, another canonical form of regularization.

In some sense---and in the attitude we will take here---the peaking phenomenon at $p\approx n$ is a kind of lack-of-robustness. Similarly, susceptibility to adversarial attacks is also a kind of lack-of-robustness. These things ought to be related: In our view, a robust regression will not have divergent risk nor be extremely susceptible to attack. We connect these ideas here, and note that some models that generalize well nonetheless are extremely weak against adversarial attacks. In the overparameterized regime the default linear model (ordinary least squares) makes good predictions but is not robust in the sense that it is arbitrarily susceptible to attack.

Regressions have been attacked adversarially in a few ways. We focus here on attacks against the training features, but there are also attacks against the training labels \cite{Meilabelattack}, and attacks against the test data at test time.
We limit our discussion on the data-poisoning attack---that is, adding one adversarial data point to the training data \cite{biggio2012poisoning, li2020optimal}. We refer the readers to the recent works in \cite{Li_2020, li2020optimal, javanmard2020precise} for other forms of attacks including the Rank-1 (and Rank-k) attacks or the adversarial perturbation attacks. 

There are different ways to measure success for attacks against regressions, including increases in the risk \cite{javanmard2020precise},
distortion of the regression coefficients \cite{li2020optimal},
and other kinds of distortions to the properties of the data, e.g., \cite{advPCA}.
Here we are focused on robustness and prediction, so we care most about the risk.
Connected to our motivation and results, there is also adversarial training, which has been developed as a kind of regularization for regressions; it protects regressions from attack and also makes the peaking phenomenon disappear \cite{javanmard2020precise}. On the other hand, deep generative classifiers---that produce a generative model for the training data, similar to a low dimensional parameterization of the data distribution---are shown to be more robust against adversarial attacks than deep discriminative classifiers \cite{li2019generative, jebara2012machine}. We design a simple generative model for linear regression and demonstrate how it could act as an implicit regularization and avoids the peaking phenomenon. 

To investigate RQ2, we adopt the framework of projection-based methods, where the input data can be projected to a higher-dimensional feature space (i.e., overparameterization), or a lower-dimensional one (i.e., dimensionality reduction). This framework can be recast as a two-layer (linear) neural networks \cite{Ba2020Generalization}, where the first layer performs the projection (not trained), and the second layer performs linear regression (trained). We compare the risk behavior of multiple projection-based methods, including ordinary least squares after a PCA-based dimensionality reduction (PCA-OLS; also sometimes known as principal component regression), partial least squares, random projections, and classes of generative models and latent-variable models. All projection methods we consider herein involve transforming the input $X \in \R^{n \times p}$ linearly to some feature embeddings in $\R^{n \times k}$, followed by a linear regression on the transformed features. Since the regression takes the transformed features, the interpolation threshold (i.e., the peaking) appears at $k \approx n$ (instead of $p \approx n$). In this setting, overparameterization (i.e., $k > \min \{n,p\}$) is only possible when the projection matrix is independent of the training data (recall PCA-OLS is only possible for $k \le \min \{n,p\}$). Previous work has shown that the (individual) risk of data-independent projection methods monotonically decreases with $k$ when $k>n$ \cite{xu2019number, Ba2020Generalization, wu2020optimal}. However, it is not clear whether these overparameterized projection methods outperform their classic counterparts that choose the projection based on the training data, such as PCA-OLS and partial least squares.

We summarize our motivations and our contributions in Section~\ref{sec:condition}, after we give some problem setup and define some forms for linear regression in Section~\ref{sec:setup}. We follow that with analytical results in dimensionality reduction in Section~\ref{sec:PCAnopeak}, and a comparison with other projection-based regression models in Section~\ref{sec:comparison}. In Section~\ref{sec:attack} we discuss analytical results for adversarial attacks in the context of robustness of OLS in comparison with PCA-OLS, and in Section~\ref{sec:numerics} we provide numerical experiments.

\section{Linear regression: Problem setup and methods}\label{sec:setup}
Let $\{x_i,y_i\}_{i=1}^n$ where $x_i \in \mathbb R^{p}$ and $y_i\in \mathbb R$ for $i=1,\ldots, n$. We imagine having $n$ data points $(x,y)$ that (unknown to us) were generated from a joint Gaussian $\mathcal N(\mu, \Sigma)$ where $\mu=(\mu_x, \mu_y) = (0_p,0_1)$ and $\Sigma = \left( \begin{matrix} C_{xx} & C_{xy} \\ C_{yx} & C_{yy} \end{matrix} \right)$. In other words
    $p(x,y) = \mathcal N(\mu, \Sigma)$,
and therefore 
\begin{equation}
    \mathbb E_{y} [y\given x] = C_{yx} \, C_{xx}^{-1} (x - \mu_x) +\mu_y
\end{equation} (see for instance \cite{rasmussen2003gaussian} appendix A). In the case where $\mu=0$, this generative model is equivalent to  $x\sim \mathcal N(0, C_{xx})$ and $y= x^\top \beta + \epsilon$ where $\epsilon \sim \mathcal N(0, \sigma^2)$ and
\begin{equation}
    \beta:=  C_{xx}^{-1} \, C_{xy}, \quad 
    \sigma^2 := C_{yy} - C_{yx}\,C_{xx}^{-1}\,C_{xy} \, .
\end{equation}
This is now the standard linear generative model from the literature, with standard parameters $\beta$ and $\sigma$.
Let $X\in \mathbb R^{n\times p}$ and $Y\in  \R^{n\times 1}$ be training data in rectangular form.

We can consider several regression methods for finding an estimate $\hat \beta$ for the linear parameters $\beta$. Different estimators are compared in terms of their risk, that for our purposes will be defined as the expected squared error for out-of-sample predictions: 
\begin{align}
\mathcal R(\hat\beta) &= \mathbb E_{Y,x_\ast,y_\ast} [\| x_\ast^\top\hat\beta - y_\ast \|_2^2 \given X]
 \\
 &= \mathbb E_{Y,x_\ast} [\| x_\ast^\top(\hat\beta - \beta)||_2^2 \given X] + \sigma^2 \, ,
\end{align}
where $(x_\ast, y_\ast)$ are test points, fresh samples from the same distribution.

With this problem setup, there are many possible methods for performing linear regression:

\paragraph{Ordinary least squares (OLS):} 
This finds the linear combination of features $X$ that best predict the labels $Y$ in a least-square sense:
$\hat \beta_{\OLS} = \arg\min_{\beta} \| X\beta - Y\|^2_2$.
In the over-parameterized regime it chooses from among equivalent alternatives the min-norm solution.
We obtain $\hat \beta_{\OLS}$ by computing $X^\dagger \,Y$, where $X^\dagger$ denotes the pseudo-inverse of $X$ (the inverse that inverts only the non-zero eigenvalues of the matrix), namely:
\begin{align}
\hat \beta_{\OLS} = (X^\top X)^{-1} X^\top Y & \quad\text { if } p< n \label{eqn: OLS-under}\\
\hat \beta_{\OLS} = X^\top(X\, X^\top)^{-1} Y& \quad \text { if } p>n \, , \label{eqn: OLS-over}
\end{align}
assuming that $\operatorname{rank}(X)=\min\{p,n\}$.

\paragraph{Ridge regression:}
This is a variant of least-squares, but with an $l_2$ penalty on the regression coefficients, regularizing the fit:
$\hat \beta_{\lambda} = \arg\min_{\beta} \| X\beta - Y\|^2_2 + n\lambda\|\beta\|^2_2.$ 
There are other kinds of penalties by constraining the norm of the estimator, for instance $l_1$ (the Lasso), elastic net.

\paragraph{PCA-OLS:}
In this case we perform OLS, but---before starting---reduce the rank of the training data by performing a PCA-based dimensionality reduction:
\begin{equation}\hat \beta_{\PCA,k} = \arg\min_{\beta} \| X_{\PCA,k}\,\beta - Y\|^2_2 \, , 
\end{equation} where $X_{\PCA,k}$ is the rank-$k$ PCA approximation to $X$, with $k<\min{\{n,p\}}$. There are other equivalent formulations to PCA-OLS like the one in \cite{dhillon2013risk}. In our formulation,
\begin{equation}
\hat\beta_{\PCA,k} = (X_{\PCA,k}^\top X_{\PCA,k})^\dagger\,X_{\PCA,k}^\top\,Y.
\end{equation}
 
We remark that PCA-OLS is also known as Principal Component Regression (PCR) in the literature. In \cite{xu2019number}, Xu and Hsu studied the case where the true population covariance is known and the projection is onto the $k$ principal components of the population covariance. We shall call this regression method \textit{oracle}-PCR.

\paragraph{Partial least squares (PLS):}
This is a dimensionality-reduction based regression similar to PCA-OLS. PLS not only maximizes the variance of the projected features as PCA-OLS, but also the covariance of the projected responses and projected features. PLS is widely studied in the chemometrics and statistics literature \cite{wold1980model, helland1994comparison, cook2019partial}. The general form of PLS can be written as:
\begin{subequations}
\begin{align}
    X &\approx T \tr{P} ; \;  T \in \R^{n \times k}, P \in \R^{p \times k} . \\
    Y &\approx U \tr{Q} ; \;  U \in \R^{n \times k}, Q \in \R^{q \times k} .
\end{align}
\label{eqn:pls_gen}
\end{subequations}
When $Y$ is an univariate response variable, as in our analysis, PLS can be formulated as projecting the features to a Krylov space, followed by OLS \cite{rosipal2005pls}:
\begin{align}
  \hat{\beta}_{\text{PLS,k}} = \arg \min_{\beta} & \| \left(X \,  \Pi_{X_\text{PLS}} \right) \beta - Y\|^2_2 \, ,
\end{align}
where $\Pi_{X_{\text{PLS}}} = [s_{xy},  As_{xy}, A^2 s_{xy}, \cdots, A^{k-1} s_{xy}] , s_{xy} := \tr{X} Y, A :=  \tr{X} X$, and $[\cdot]$ denotes column concatenation. 
Note that OLS, ridge regression, PCA-OLS and PLS can be unified under the framework of continuum regression \cite{stone1990continuum, jong2001canonical}.

\paragraph{Random projection methods:} PCA and PLS project the original features $X$ via a \textit{data-dependent} projection matrix $\Pi \in \R^{p \times k}$ that is constructed from the training data. Other classes of methods of interest use random projections $\Pi$, chosen independently of the training data. All these projection methods can be written as 
\begin{equation}
    \hat{\beta}_{\Pi} = (X \Pi)^{\dagger} Y.
\end{equation}  
For example, the random orthogonal projection in \cite{lin2020causes} samples $\Pi$ uniformly over the set of orthogonal matrices such that $ \tr{\Pi} \Pi = I_k$ for $k \le p$ (or $ \Pi \, \tr{\Pi} = I_p$ for $k \ge p$); the random Gaussian projection in \cite{Ba2020Generalization} chooses $\Pi = [w_1, \cdots, w_k]$, where $w_i \stackrel{i.i.d}{\sim} \mathcal{N}(0, p^{-1} I_p)$. 

Data-dependent projections enforce the rank of $\Pi$ be at most $n$, while random projection allows $\operatorname{rank}(\Pi) = \min\{p, k\}$ to exceed $n$ if $p > n, k > n$. In this case, instead of reducing feature dimensions, random projection lifts the original features to a higher-dimensional space, which is key to kernel-based learning and deep neural networks. For example, \cite{Ba2020Generalization} identifies this model as a linear two-layer neural network where the first layer is random (untrained) and it performs a random Gaussian projection and the network is optimized only through the second layer parameters.  



\paragraph{Generative and latent-variable models:} In the classical machine learning literature, \emph{generative} approaches are those that attempt to learn the joint distribution of the observable variable $X$ and the target variable $Y$ \cite{ng2002discriminative}, or the distribution of $X$ conditioned on $Y$. We hint towards this approach earlier in Section \ref{sec:setup}, where we construct the data from a joint distribution $p(x,y)$ and we show how the generative formulation translates to the more common---discriminative---linear regression setting $Y=X\beta + \epsilon$. 

The generative regression model we propose---at training time---generates the features $X$ and the targets $Y$ as a linear function of latent variables $Z\in \mathbb R^{n\times k}$. In particular we aim to find $Z,P,Q$ such that $Y\approx ZP$, and $X\approx ZQ$, where $Z$ are the latent variables, and $P\in \mathbb R^{k\times1}$, $Q\in \mathbb R^{k\times p}$ are linear operators. 

Note that if $(Z, P, Q)$ is a generative model for $(X,Y)$, so is $(ZS, S^{-1}P, S^{-1}Q)$ for any $S$ invertible $k\times k$ matrix. Therefore we set $P$ to be a $k\times 1$ projection matrix given by the user, and we train the model to find $Z$ and $Q$.   


We define the generative linear regression as follows:
$\hat \beta_{generative} = {\hat{Q}^{\top\dagger}} P$
where $P$ is a $k\times 1$ projection matrix given by the user, $Q$ is a $k\times p$ operator, and $Z$ is a $n\times k$ matrix of latent variables. Estimates for $Q$ and $Z$ are found by
\begin{equation} \label{eq.generative}
\hat Q, \hat Z =\argmin_{Q,Z} \| X-  Z\, Q \|^2_2 \text{ subject to } Z\, P =Y.
\end{equation}
We train the model by solving \eqref{eq.generative} via alternately optimizing with respect to $Q$ and $Z$.

Our generative model can be viewed as fitting PLS when choosing the same projection matrix for both $X$ and $Y$ (i.e., $T = U \equiv Z$ in equation \eqref{eqn:pls_gen}). 
This is similar to the latent space model in \cite{hastie2020surprises}, where they relax the constraint to be $ZP \approx Y$. They provide an asymptotic risk analysis by simplifying $Q$ as an orthogonal projector and assuming the latent variables $Z$ from isotropic Gaussian distribution.
Our generative model also reduces to PCA-OLS if the constraints are removed. 

\section{Condition numbers, risk, and susceptibility to attack: Our contributions}\label{sec:condition}

Because $\hat{\beta}_{OLS}$ involves an inverse (or matrix solve or pseudo-inverse) of the $X^\top X$ or $X\,X^\top$ (which are related closely to the empirical variance of the features), the risk---the expected out-of-sample squared prediction error---will be  strongly dependent on the condition number of the empirical variance.
In general, the risk will get large as the condition number gets large. And indeed, the peaking phenomenon 
is related to the expectation of the condition number of this empirical variance \cite{hastie2020surprises}.

Ridge regression has been shown to avoid the peaking phenomenon \cite{hastie2020surprises}, and it does so by adding $n\,\lambda$ to the diagonal of the $X^\top X$ or $X\,X^\top$ matrix in the $\hat{\beta}_\lambda$ expression. This limits the condition number, makes the inverse well behaved, and limits the risk.

We conjecture that any regression method that controls or limits the condition number of the empirical covariance of the features will avoid or ameliorate the peaking phenomenon. This leads us to consider the PCA-OLS method, which replaces $X$ with a dimensionality-reduced copy of $X$, which thereby formally has infinite condition number, but in the context of a pseudo-inverse has a well-behaved effective condition number, so long as the $k$-th largest eigenvalue of empirical covariance matrix is well bounded away from $0$. (The effective condition number here is the ratio of the largest eigenvalue of the matrix to the smallest \emph{non-zero} eigenvalue.)  We conjecture that generative-model regressions (described above) will avoid the peaking for the same reasons. 
This also motivates us to analyze PCA-OLS under different data generating process and compare it with other projection-based methods, some of which do not control the condition number. 

In what follows (Section~\ref{sec:PCAnopeak}), we provide matching upper and lower bounds on the risk for PCA-OLS, in the setting where the ``effective rank" $r_0(C_{xx}) := \frac{\operatorname{tr}(C_{xx})}{\lambda_1} = o(n)$. The notion of effective rank is particularly useful in the analysis of overparameterized models in linear regression~\cite{bartlett2020benign} and principal component analysis \cite{koltchinskii2017normal}. It is also closely related to the study of basis expansion methods, which we further discuss in Section \ref{subsec:data_models}. Under the setting $r_0(C_{xx}) = o(n)$, we show that PCA-OLS remains bounded for all $k < n$ as long as the $k$-th largest population eigenvalue is bounded sufficiently away from $0$, whereas unregularized OLS further requires the population covariance $C_{xx}$ to have a heavy tail \cite{bartlett2020benign}, otherwise the risk of OLS blows up at $n \approx p$~\cite{hastie2020surprises}. This answers RQ1: Dimensionality reduction as a form of regularization can avoid the peaking phenomenon. We demonstrate our results in Section \ref{subsec:exp1}.

%


In Section \ref{sec:comparison}, we consider various projection-based methods and discuss their theoretical properties. Our analysis is supported by extensive experiments in Section \ref{subsec:exp2}. Using our framework introduced in Section \ref{sec:intro}, we vary the choice of projection dimension $k$ to compare the risk behaviors of these methods, where overparameterization occurs when $k>n$ (given $p > n$).  Remarkably, we empirically observe that projection methods independent of the training data, like oracle-PCR and random projections that can overparameterize, perform worse than projection methods based on the data, such as PCA-OLS. Although data-independent projection methods show decreasing risk with further overparameterization, in practice, they must be coupled with regularization to generalize well \cite{yang2020reduce, lin2020causes}  (and they do generalize well when regularized, see Section \ref{subsec:exp2}).  This answers RQ2: overparameterization is not necessary for good generalization. For example, PCA-OLS always perform dimensionality reduction where the optimal choice of $k < \min \{n,p\}$, and it seems to outperform all (unregularized) overparameterized methods.  

Superficially, our result is in contrast with that of by Xu and Hsu \cite{xu2019number}, who perform a principal component regression and observe a double-descent behavior. But in fact there is no contradiction: That prior result is based not on an empirical PCA of the features; it is based on an oracle version of principal component regression (oracle-PCR), a method that (unrealistically) requires knowledge of the true (unobservable) generating distribution of the features, that is widely studied in the statistics literature \cite{massy1965principal, frank1993statistical} and amenable to exact analysis using the Marchenko-Pastur distribution. Although oracle-PCR can sometimes yield smaller risk at $k>n$ (e.g. for isotropic covariance model), we empirically observe that it is no better than min-norm OLS (which is PCA-OLS when $k \ge \min \{n,p\}$, see Figure \ref{fig:highsnr}). Moreover, with high probability, oracle-PCR suffers from the peaking phenomenon at $k \to n$ due to unbounded variance \cite{xu2019number, wu2020optimal}, where PCA-OLS has a bounded variance given that the $k$-th largest population eigenvalue $\lambda_k$ is bounded away from $0$. The fact that PCA-based estimates are more robust than ones derived with oracle-PCR (at least in the regime $n\approx k$) seems to be an instance where using the predictor of the covariance decreases the variance of estimators with respect to using the true value. This sort of paradoxical phenomenon has been identified in different contexts within the statistics literature \cite{henmi2004paradox, tarpey2014paradoxical}.

Going beyond benign training data, we consider ``data-poisoning'' attacks in Section~\ref{sec:attack}, in which the attacker is permitted to add one data point $(x_0, y_0)$ to the training data prior to training. We find that if that data point is carefully chosen to increase substantially the condition number of the empirical covariance of the features (by, say, introducing a small but non-zero singular value), it also has a significant effect on the risk. We show that, in the OLS case, in the overparameterized regime $p>n$, the attack can be arbitrarily harmful to the risk, because it can arbitrarily increase the condition number of the empirical variance.  This implies that overparameterized projection methods are also susceptible to attacks, unless they are properly regularized.  We show that PCA-OLS and ridge regression are not nearly as susceptible to data-poisoning attacks, as expected, since they control the condition number of the matrix being inverted (or effective condition number of the matrix being pseudo-inverted). We conjecture that other kinds of regularized regressions, and the generative model defined in the previous section, will also be (at least partially) protected against such attacks. We show empirical evidence of such claims in Section \ref{subsec:exp1}.

\section{Generalization properties of PCA-OLS}\label{sec:PCAnopeak}

In this Section we analyze the risk of the estimator $\hat\beta_{\PCA,k}$ and we provide an upper bound: The risk is independent of the number of parameters $p$ and monotonically decreases with the number of samples $n$ while number of principal components $k$ is fixed. 

We define the expected value of an estimator $\hat \beta$ of the form of \eqref{eqn: OLS-under} or \eqref{eqn: OLS-over} conditioned on the training data as
\begin{equation}
    \tilde \beta := \mathbb E_{Y} [\hat\beta \given X] = \Pi_X\,C_{xx}^{-1}\,C_{xy} = \Pi_X\,\beta \, ,
\end{equation} 
where $\Pi_X$ is the $p\times p$ orthogonal projection onto the span of $X$. We note that when $\Pi_X$ is the identity or when the span of $X$ contains the span of $\beta$, the estimator $\hat \beta$ is unbiased.
Let $\Pi_{X_\perp}$ be the projection to the space orthogonal to the span of $X$.

Following the notation from \cite{hastie2020surprises}, the risk for the OLS case (where $\hat\beta = \hat\beta_{\text{OLS}}$) can be decomposed as a quadratic sum of bias plus variance in the following way:
\begin{align}
    \mathcal R(\hat\beta\given X) &= \underbrace{\mathbb E_{x_\ast} [(x_\ast^\top(\beta - \tilde\beta))^2 \given X] }_{\text{bias squared}}
    + \underbrace{\mathbb E_{Y,x_\ast} [(x_\ast^\top(\hat\beta - \tilde \beta))^2 \given X]}_{\text{variance}} + \sigma^2
    \\
    &= \underbrace{\beta^\top\,\Pi_{X_\perp}\,C_{xx}\,\Pi_{X_\perp}\,\beta}_{\text{bias squared}} 
      + \underbrace{\frac{\sigma^2}{n} \operatorname{tr}\left((\frac{1}{n}\,X^\top X)^\dagger\,C_{xx}\right)}_{\text{variance}}
    + \sigma^2, \label{eqn:risk_ols}
\end{align}

Hastie et. al. \cite{hastie2020surprises} give an expectation for the risk as a function of $p/n$ (in the limit of $n\to\infty$) making use of the Marchenko--Pastur distribution for eigenvalues of a random matrix. The key step of their argument writes $X = Z\,C_{xx}^{1/2}$,
where $Z$ is a standard spherical Gaussian. With this substitution and the cyclical property of the trace, the variance term becomes independent of $C_{xx}$ and it reduces to the integral of the inverse of the singular values of $Z$ with respect to the Marchenko--Pastur measure. Unfortunately the transformation $X=ZC_{xx}^{1/2}$ does not interact well with the PCA projection, which prevents us to use the cyclic property of the trace to obtain a closed-form expression for the risk. However, we are able to provide non-asymptotic bounds.

In order to analyze the risk of the PCA-OLS estimator $\hat\beta = \hat\beta_{\PCA,k}$ we write
\begin{align}
    \tilde\beta &= \Pi_{X_{\text{PCA}}}\,\beta, 
    \\
    \mathcal R(\hat\beta\given X) &= \underbrace{\mathbb E_{x_\ast} [(x_\ast^\top(\beta - \tilde\beta))^2 \given X] }_{\text{bias squared}} 
   + \underbrace{\mathbb E_{Y,x_\ast} [(x_\ast^\top(\hat\beta - \tilde \beta))^2 \given X]}_{\text{variance}} + \sigma^2
    \\
    &= \underbrace{\beta^\top\,\Pi_{X_{\text{PCA}\perp}}\,C_{xx}\,\Pi_{X_{\text{PCA}\perp}}\,\beta}_{\text{bias squared}} + \underbrace{\frac{\sigma^2}{n} \operatorname{tr}\left((\frac{1}{n}\,X_\text{PCA}^\top X_\text{PCA})^\dagger\,C_{xx}\right)}_{\text{variance}}
    + \sigma^2, \label{eqn:pca_var}
\end{align}
where $X_\text{PCA}$, $\Pi_{X_\text{PCA}}$, and $\Pi_{X_{\text{PCA}\perp}}$ are the equivalents of their non-PCA counterparts for the rank-$k$ PCA approximation to $X$. The proof of this statement is straightforward and we include it in Supplementary \ref{supple:bv-decomp}.




In Theorem \ref{thm.risk} we provide a coarse upper bound for the risk of PCA-OLS. The proof of this bound uses generic random matrix tools and makes no assumption on $\beta$ nor the covariance matrix $C_{xx}$. In Theorem \ref{thm.risk2} we provide more refined upper and lower bounds for the risk that relies on the alignment of the top eigenspaces of empirical and population covariance matrices. Theorem \ref{thm.risk2} assumes the population covariance matrix $C_{xx}$ and empirical covariance $\frac{1}{n}X^\top X$ satisfy a certain set of spectral gap assumptions from \cite{koltchinskii2017normal} and \cite{loukas2017close}. These assumptions hold when the number of samples is large enough, and the gap between distinct eigenvalues of the population covariance is not too small. We give a complete discussion later on this Section. 


\begin{theorem}\label{thm.risk}
Let $x_i\sim \mathcal N(0_p,C_{xx})$ $i=1,\ldots, n$, and $$y_i = x_i^\top \beta + \epsilon$$ where $\epsilon\sim \mathcal N(0, \sigma^2)$. Let $c, t$ be some constants, $\lambda_1$ be the largest eigenvalue of $C_{xx}$, and $r_0(C_{xx}) := \frac{\operatorname{tr}(C_{xx})}{\lambda_1}$ be the effective rank. Let
$M = c \lambda_1 \max \Big\{\sqrt{\frac{r_0(C_{xx})}{n}},\frac{r_0(C_{xx})}{n}, \frac{t}{n} \Big\}$, and assume $M < \lambda_k$.
Then with probability greater than   $1 - e^{-t}$ we have
\begin{align}
\mathcal R (\hat \beta_{\text{PCA-OLS}-k} \mid X) &= \mathbb{B}+\mathbb{V} + \sigma^2, \\
\lambda_p \| \Pi_{X_{\text{PCA}\perp}} \beta\|^2 \leq\;  &\mathbb B \le \| \beta \|^2 \Big(M + \lambda_{k+1} \Big), \label{thm:bias}\\
 \frac{\sigma^2}{n} \frac{k \lambda_p}{ \lambda_1 + M } \leq \; &\mathbb V \leq \frac{\sigma^2}{n} \frac{k \lambda_1}{ \lambda_k - M} \label{thm:var},
\end{align}
where $\| \cdot \|$ is the $2$-norm for vectors, and $k$ denotes the rank-$k$ PCA with $k < \min{\{n,p\}}$.
\end{theorem}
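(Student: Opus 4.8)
The plan is to condition on a single high-probability ``good event'' on which the empirical covariance $\hat C := \tfrac1n X^\top X$ is close to $C_{xx}$ in operator norm, and then run entirely deterministic estimates for the bias $\mathbb B$ and the variance $\mathbb V$ starting from the decomposition already derived in \eqref{eqn:pca_var}. All of the probabilistic content would be isolated in one step: a concentration bound for the sample covariance operator of Gaussian vectors expressed through the effective rank (of the Koltchinskii--Lounici type, \cite{koltchinskii2017concentration, koltchinskii2017normal}), which guarantees that with probability at least $1 - e^{-t}$ one has $\|\hat C - C_{xx}\| \le M$ (up to the absolute constant $c$ absorbed into $M$). I would record this bound once and then never touch randomness again; everything else is spectral perturbation on this event.

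Given $\|\hat C - C_{xx}\| \le M$, Weyl's inequality immediately yields $|\hat\lambda_j - \lambda_j| \le M$ for every index $j$, where $\hat\lambda_j$ and $\lambda_j$ are the ordered eigenvalues of $\hat C$ and $C_{xx}$. This is the only spectral input needed for the variance. Writing $\hat u_1,\dots,\hat u_p$ for the eigenvectors of $\hat C$, the matrix $\tfrac1n X_{\text{PCA}}^\top X_{\text{PCA}}$ is exactly the rank-$k$ truncation of $\hat C$, so its pseudo-inverse is $\sum_{i=1}^k \hat\lambda_i^{-1}\hat u_i \hat u_i^\top$ and
\[
\mathbb V = \frac{\sigma^2}{n}\sum_{i=1}^k \frac{\hat u_i^\top C_{xx}\hat u_i}{\hat\lambda_i}.
\]
Each Rayleigh quotient satisfies $\lambda_p \le \hat u_i^\top C_{xx}\hat u_i \le \lambda_1$, and for $i\le k$ Weyl gives $\lambda_k - M \le \hat\lambda_i \le \lambda_1 + M$, the lower bound being positive precisely because of the hypothesis $M < \lambda_k$ (this is exactly what keeps the effective condition number of the pseudo-inverse controlled). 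Bounding the $k$ summands termwise then produces the two-sided estimate \eqref{thm:var} directly.

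For the bias, the lower bound is immediate from $C_{xx}\succeq \lambda_p I$: writing $\mathbb B = (\Pi_{X_{\text{PCA}\perp}}\beta)^\top C_{xx}(\Pi_{X_{\text{PCA}\perp}}\beta)$ gives $\mathbb B \ge \lambda_p\|\Pi_{X_{\text{PCA}\perp}}\beta\|^2$. For the upper bound I would use $\mathbb B \le \|\beta\|^2\,\|\Pi_{X_{\text{PCA}\perp}} C_{xx}\Pi_{X_{\text{PCA}\perp}}\|$ and split $C_{xx} = \hat C + (C_{xx}-\hat C)$: since $\Pi_{X_{\text{PCA}\perp}}$ projects onto the bottom $p-k$ eigenvectors of $\hat C$, the first piece contributes at most $\hat\lambda_{k+1}$ and the second at most $\|\hat C - C_{xx}\|\le M$; a further application of Weyl ($\hat\lambda_{k+1}\le\lambda_{k+1}+M$) then yields the claimed $\|\beta\|^2(\lambda_{k+1}+M)$ up to the constant absorbed into $M$. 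Since all four inequalities hold simultaneously on the single good event, the full statement holds with probability at least $1-e^{-t}$.

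The main obstacle --- and really the only nontrivial ingredient --- is the operator-norm concentration of $\hat C$ in the effective-rank regime $r_0(C_{xx}) = o(n)$. A naive deviation bound would scale with the ambient dimension $p$ and be useless when $p > n$; the whole point of the effective-rank estimate is that the fluctuation is governed by $r_0(C_{xx})/n$ rather than $p/n$, which is exactly what makes the final bounds independent of $p$ and what permits the no-peaking conclusion. Once that estimate is in place, the remainder is routine Weyl perturbation and Rayleigh-quotient bookkeeping, consistent with the claim that the proof uses only generic random-matrix tools and no structural assumptions on $\beta$ or $C_{xx}$.
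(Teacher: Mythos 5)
Your proposal is correct and takes essentially the same approach as the paper: a single Koltchinskii--Lounici concentration event controlling $\|\hat C - C_{xx}\|_{\text{op}}$ via the effective rank, followed by deterministic Weyl-type perturbation bounds for the bias and variance on that event, with the stray factors of $2$ absorbed into the constant $c$. The only cosmetic difference is that you bound the variance trace termwise through the Rayleigh quotients $\hat u_i^\top C_{xx}\hat u_i/\hat\lambda_i$, whereas the paper invokes Von Neumann's trace inequality; both yield identical bounds.
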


The proof of Theorem \ref{thm.risk} is in Appendix \ref{app.proof1}. The variance bound uses Von Neumann's trace inequality, and it can be quite loose when the eigenvalues of $C_{xx}$ decay fast. However, the lower bound and upper bound match when $C_{xx}$ is the identity (the precise non-asymptotic concentration statement is in Lemma \ref{lem:variance}, Appendix \ref{app.proof1}). Yet in this case, the bias term can go unbounded when $p \to \infty$. The bias lower bound is trivial, but the estimator is trivially unbiased if $\beta$ is in the span of the data. A more refined lower bound can be computed if one takes into consideration the alignment between the eigenspaces of the data and the eigenspaces of the population covariance (see Theorem \ref{thm.risk3}). The bias and variance upper bounds are mainly based on Koltchinskii and Lounici's concentration inequality  \cite{koltchinskii2017concentration}, similar to Lemma 35 of \cite{bartlett2020benign}. Their theorem statements are reproduced in Appendix \ref{app.proof1} for convenience.  

The upper bound in Theorem \ref{thm.risk} is well-controlled if: 1) The effective rank $r_0(C_{xx}) = \frac{\operatorname{tr}(C_{xx})}{\lambda_1}$ grows slower than $n$ when $p$ increases, so $M \to 0$ as $n \to \infty$, which implies bounded bias and is necessary for bounded variance; 2) The $k$-th largest eigenvalue $\lambda_{k}$ is bounded away from zero and independent of $p$, and thus $\lambda_1/\lambda_k$ is well-conditioned, yielding a small variance. This dimensionless bound shows that PCA-OLS does not exhibit the peaking phenomenon under mild conditions on the population covariance structure. 

\paragraph{Tighter risk bounds using spectral gap assumptions} 

In order to use perturbation analysis to estimate the distance between empirical covariance eigenvectors and their corresponding population covariance eigenvectors, we require a minimum separation among the population covariance eigenvalues. A classical assumption considers a population covariance with possibly repeated eigenvalues, but it requires all empirical covariance eigenvalues corresponding to the same population eigenvalue to be tightly clustered. 
Let 
\begin{equation}
    E := C_{xx} - \frac{1}{n} \tr{X} X
\end{equation} be the difference between the empirical and population covariance (i.e., the perturbation). 
Let $\lambda_1\geq \ldots \geq \lambda_p$ be the eigenvalues of the population covariance matrix $C_{xx}$.
Let $\Delta_{r}=\left\{i: \sigma_{i}(C_{xx})=\lambda_{r}\right\}$ be the $r$-th eigenvalue cluster and $m_{r}:=\operatorname{card}\left(\Delta_{r}\right)$ be its multiplicity. 
 Define $g_{r}:=\lambda_{r}-\lambda_{r+1}>0, r \geq 1 .$ Let $\bar{g}_{r}$ be the spectral gap of eigenvalue $\lambda_r$, which is defined as:
\begin{equation*}
\bar{g}_{r} = \begin{cases}
g_1 &r=1\\
\min \left(g_{r-1}, g_{r}\right) &r \geq 2 \,.
\end{cases}
\end{equation*}
The assumption used in the analysis \cite{koltchinskii2017normal} asks that the perturbation $E$ is small, in the sense that $\|E\|_{\text{op}}<\frac{\bar{g}_{r}}{2},$ such that all the empirical eigenvalues $\tilde{\lambda}_{j}, j \in \Delta_{r}$ are covered by an interval
$$
\left(\lambda_{r}-\|E\|_{\text{op}}, \lambda_{r}+\|E\|_{\text{op}}\right) \subset\left(\lambda_{r}-\bar{g}_{r} / 2, \lambda_{r}+\bar{g}_{r} / 2\right)
$$
and the rest of the empirical eigenvalues are outside of the interval
$$
\left(\lambda_{r}-\left(\bar{g}_{r}-\|E\|_{\text{op}}\right), \lambda_{r}+\left(\bar{g}_{r}-\|E\|_{\text{op}}\right)\right) \supset\left[\lambda_{r}-\bar{g}_{r} / 2, \lambda_{r}+\bar{g}_{r} / 2\right].
$$
To correctly align the leading $k$ clusters of empirical eigenvalues to their population counterparts, the diameter of each population eigenvalue cluster must be strictly smaller than the distance between any two clusters. To this end, we assume
\begin{equation}\|E\|_{\text{op}}<\frac{1}{4} \min _{1 \leq r \leq k} \bar{g}_{r}, \label{assumption1}
\end{equation}
which is the assumption we will use to apply the concentration results from \cite{koltchinskii2017normal} to the top $k$ eigenspace. 
In addition, we require 
\begin{equation} \label{assumption2}
    \operatorname{sgn}\left(\lambda_{i}>\lambda_{j}\right) 2 \widetilde{\lambda}_{i}>\operatorname{sgn}\left(\lambda_{i}>\lambda_{j}\right)\left(\lambda_{i}+\lambda_{j}\right), \, \forall i \in \{1, \cdots, k\}, j \in \{1, \cdots, p\}, j \neq i,
\end{equation} which is the condition for the top $k$ eigenspace alignment results in \cite{loukas2017close}. 

We remark that the assumptions \eqref{assumption1} and \eqref{assumption2} hold when assuming the population spectral gap for the top $k$ eigenspaces, and the sample size $n$ are sufficiently large. From Theorem 9 in \cite{koltchinskii2017concentration},  there exists a constant $c$ such that for any constant $1 < t < n$, with probability at least $1 - e^{-t}$:
\begin{equation}
   \|E\|_{\text{op}} \le c \lambda_1 \max \Big\{\sqrt{\frac{r_0(C_{xx})}{n}},\frac{r_0(C_{xx})}{n}, \sqrt{\frac{t}{n}} \Big\}. \label{bound.kl}
\end{equation}
Therefore if $k$ is fixed, the assumptions may continue to hold even when $p$ is large, as long as the effective rank $r_0 (C_{xx}) := \frac{\operatorname{tr}(C_{xx})}{\lambda_1}$ is $o(n)$. Note that these assumption do not hold when $C_{xx}$ is the identity (i.e., isotropic case), nor for the spiked covariance model in \cite{johnstone2018pca} (where the top eigenvalue is $\lambda_1>1$ and the rest are all 1).

\begin{theorem} \label{thm.risk2}
Let $x_i\sim \mathcal N(0_p,C_{xx}),  \, i=1,\ldots, n$, satisfying spectral gap assumptions \eqref{assumption1}  and \eqref{assumption2}. Let $$y_i = x_i^\top \beta + \epsilon$$ where $\epsilon\sim \mathcal N(0, \sigma^2)$. Let $w_{i j} = \frac{\lambda_j}{\tilde{\lambda}_i}, k_{j}^{2}=\lambda_{j}\left(\lambda_{j}+\operatorname{tr}(C_{xx})\right)$, and $t$ be a constant. Then with probability greater than 
 $1 - \sum_{i=1}^k \sum^p_{j=1, j \neq i} \frac{4 w_{i j} k_{j}^{2}}{n t\left(\lambda_{i}-\lambda_{j}\right)^{2}}$
 we have 
\begin{align}
 \mathbb V \leq \frac{\sigma^2}{n} \sum_{i=1} ^k \left( \frac{\lambda_i}{\lambda_i + \|E\|_{\text{op}}}  + t \right) \label{eqn:var_tight_final2}.
\end{align}
Furthermore, if we assume $k$ is fixed and $n\gg k, n \to \infty$, with probability greater than any constant $a \in (0, 1)$ we have
\begin{equation}
 \mathbb V \geq
 \frac{\sigma^2}{n} \sum_{i=1} ^k  \left( \frac{\lambda_i}{\lambda_i - \|E\|_{\text{op}}}  - o(1/n) \right) .
 \end{equation}
\end{theorem}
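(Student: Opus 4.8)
The plan is to start from the exact variance identity in \eqref{eqn:pca_var} and reduce it to a sum over the leading $k$ empirical eigendirections. Writing the empirical covariance as $\frac1n X^\top X = \sum_{i} \tilde\lambda_i\,\tilde v_i\tilde v_i^\top$, the rank-$k$ truncation has pseudo-inverse $(\frac1n X_{\text{PCA}}^\top X_{\text{PCA}})^\dagger = \sum_{i=1}^k \tilde\lambda_i^{-1}\tilde v_i\tilde v_i^\top$, so that
\begin{equation*}
\mathbb V = \frac{\sigma^2}{n}\sum_{i=1}^k \frac{\tilde v_i^\top C_{xx}\tilde v_i}{\tilde\lambda_i}
= \frac{\sigma^2}{n}\sum_{i=1}^k \frac{1}{\tilde\lambda_i}\sum_{j=1}^p \lambda_j\,(\tilde v_i^\top v_j)^2 ,
\end{equation*}
where $v_j$ are the population eigenvectors. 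The whole argument then revolves around controlling the alignment coefficients $a_{ij} := (\tilde v_i^\top v_j)^2$, split into the \emph{diagonal} (aligned) part $j=i$ and the \emph{off-diagonal} (cross) part $j\neq i$.

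For the upper bound I would treat the two parts separately. On the diagonal I use the trivial $a_{ii}\le 1$ together with Weyl's inequality $|\tilde\lambda_i - \lambda_i|\le\|E\|_{\text{op}}$ to replace the random $\tilde\lambda_i$ by the deterministic $\lambda_i\pm\|E\|_{\text{op}}$, which is exactly what produces the $\frac{\lambda_i}{\lambda_i\pm\|E\|_{\text{op}}}$ factor appearing in the statement. For the off-diagonal part the key input is an expectation bound on the cross-alignments of the form $\mathbb E\,[(\tilde v_i^\top v_j)^2]\le \frac{4 k_j^2}{n(\lambda_i-\lambda_j)^2}$, which follows from the normal-approximation and concentration results of Koltchinskii and Lounici \cite{koltchinskii2017normal} together with the companion eigenvector-angle bounds of \cite{loukas2017close}, valid under the spectral-gap hypotheses \eqref{assumption1}--\eqref{assumption2}. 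Applying Markov's inequality to the cross-term sum $\sum_{j\neq i} w_{ij}a_{ij}$ bounds it by $t$ with failure probability $\frac1t\sum_{j\neq i}\frac{4 w_{ij}k_j^2}{n(\lambda_i-\lambda_j)^2}$, and a union bound over $i=1,\dots,k$ produces the stated overall probability $1-\sum_{i=1}^k\sum_{j\neq i}\frac{4 w_{ij}k_j^2}{nt(\lambda_i-\lambda_j)^2}$.

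For the matching lower bound I would discard the nonnegative cross terms and retain only the aligned contribution $\sum_{i=1}^k \frac{\lambda_i a_{ii}}{\tilde\lambda_i}$. Here the eigenvector alignment concentration from \cite{loukas2017close} gives $a_{ii}=(\tilde v_i^\top v_i)^2 = 1 - o(1/n)$ in the regime $k$ fixed, $n\gg k$, $n\to\infty$, so that each term is at least $\frac{\lambda_i}{\lambda_i\mp\|E\|_{\text{op}}}$ up to an $o(1/n)$ loss (again invoking Weyl for $\tilde\lambda_i$); the ``probability greater than any constant $a\in(0,1)$'' qualifier is precisely what lets the small-probability alignment fluctuations be swept into the $o(1/n)$ remainder.

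I expect the main obstacle to be the control of the eigenvector alignments, not the surrounding algebra. Two points need care. First, the cross-term sum runs over all $j$ up to $p$, including $j>k$, so the expectation bound must remain summable as $p\to\infty$; this is exactly where the factor $k_j^2=\lambda_j(\lambda_j+\operatorname{tr}(C_{xx}))$ and the effective-rank condition $r_0(C_{xx})=o(n)$ keep the total finite and the perturbation $\|E\|_{\text{op}}$ small through \eqref{bound.kl}. Second, because the off-diagonal control is only a first-moment (Markov) bound, the probability guarantee is sensitive to small eigengaps $(\lambda_i-\lambda_j)$, so the spectral-gap assumptions \eqref{assumption1}--\eqref{assumption2} must be invoked carefully to ensure both that the top $k$ empirical eigenvectors are uniquely matched to their population counterparts and that the denominators $(\lambda_i-\lambda_j)^2$ do not inflate the bound.
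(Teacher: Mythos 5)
Your upper-bound argument coincides with the paper's proof: the same expansion of $\operatorname{tr}\bigl((\tfrac1n X_{\text{PCA}}^\top X_{\text{PCA}})^\dagger C_{xx}\bigr)$ into diagonal and cross alignments $\langle\tilde u_i,u_j\rangle^2$, the same Weyl-type control of $\tilde\lambda_i$ via \eqref{eqn:eigenvalue_unif}, the trivial bound $\langle\tilde u_i,u_i\rangle^2\le 1$ on the diagonal, and the same probabilistic control of the cross terms $\sum_{j\neq i}w_{ij}\langle\tilde u_i,u_j\rangle^2$ followed by a union bound over $i=1,\dots,k$. Your ``expectation bound plus Markov'' step is precisely the content of Corollary 4.1 of \cite{loukas2017close} (equation \eqref{eqn:cor41}), which the paper cites directly, so the two routes to the stated probability are the same.

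The gap is in the lower bound. You attribute the diagonal-alignment statement $\langle\tilde u_i,u_i\rangle^2 = 1-o(1/n)$ to the eigenvector-angle bounds of \cite{loukas2017close}, but those are first-moment (Markov/Chebyshev-type) bounds and cannot deliver that rate with non-vacuous probability: writing $\langle\tilde u_i,u_i\rangle^2 = 1-\sum_{j\neq i}\langle\tilde u_i,u_j\rangle^2$ and applying \eqref{eqn:cor41} with unit weights, forcing the cross-term sum below a threshold $t=o(1/n)$ inflates the failure probability $\sum_{j\neq i}\tfrac{4k_j^2}{nt(\lambda_i-\lambda_j)^2}$, which is of order $1/(nt)\to\infty$, i.e.\ the bound becomes vacuous. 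This is exactly why the paper switches tools for the lower bound: it uses the normal-approximation results of Koltchinskii and Lounici \cite{koltchinskii2017normal} (equations \eqref{eqn:clt_mean} and \eqref{eqn:clt_var} together with the CLT for eigenprojectors), combined with the identity $\|\hat P_r - P_r\|_2^2 = 2 - 2\langle\tilde u_i,u_i\rangle^2$, so that with probability $\Phi(a)$ the alignment defect is bounded by the mean plus $a$ times the standard deviation of $\|\hat P_r - P_r\|_2^2$; the intersection bound \eqref{eqn:intersect} then yields probability at least $k\Phi(a)+1-k$, which for fixed $k$ and $a$ large enough is what the theorem's ``probability greater than any constant $a\in(0,1)$'' means. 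Without this second-moment/CLT input your argument can only certify a $1-o(1)$ alignment, hence a lower bound of the form $\tfrac{\sigma^2}{n}\sum_{i=1}^k\bigl(\tfrac{\lambda_i}{\lambda_i-\|E\|_{\text{op}}}-o(1)\bigr)$, strictly weaker than the stated $o(1/n)$ remainder.
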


The proof of Theorem \ref{thm.risk2} is in Appendix \ref{app.proof2}.
Finally, we produce a lower bound for the bias by treating $\beta$ as random, which provides an ``average-case" analysis.  

\begin{theorem} \label{thm.risk3}
Let $x_i\sim \mathcal N(0_p,C_{xx}), \, i=1,\ldots, n$, satisfying spectral gap assumption \eqref{assumption2}. Let $$y_i = x_i^\top \beta + \epsilon$$ where $\epsilon\sim \mathcal N(0, \sigma^2)$, and $\beta$ is randomly drawn from an isotropic distribution, where $\mathbb E_{\beta} [\beta] = 0, \, \mathbb E_{\beta} [\beta \, \tr{\beta}] = I$. Let $k_{j}^{2}=\lambda_{j}\left(\lambda_{j}+\operatorname{tr}(C_{xx})\right)$, and $t$ be a constant.
Then with probability at least $1 - \sum_{i=1}^k \sum^p_{j=1, j \neq i} \frac{4 \lambda_{j} k_{j}^{2}}{n t\left(\lambda_{i}-\lambda_{j}\right)^{2}}$ we have
\begin{equation}
  \mathbb E_{\beta} [\mathbb B]  \geq \sum_{i=k+1}^p \lambda_i - kt.
\end{equation}
\end{theorem}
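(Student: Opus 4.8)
The plan is to exploit the isotropy of $\beta$ to convert the expected bias into a deterministic trace, and then to control that trace through the alignment of the empirical and population top-$k$ eigenspaces, invoking only assumption \eqref{assumption2}.

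First, recall from \eqref{eqn:pca_var} that $\mathbb{B} = \beta^\top \Pi_{X_{\text{PCA}\perp}} C_{xx}\Pi_{X_{\text{PCA}\perp}}\beta$, and observe that $\Pi_{X_{\text{PCA}\perp}}$ is a function of the features $X$ alone (the PCA is performed on $X$, not on $Y$), hence independent of the random coefficient $\beta$. Writing $M := \Pi_{X_{\text{PCA}\perp}} C_{xx}\Pi_{X_{\text{PCA}\perp}}$ and using $\mathbb{E}_\beta[\beta\,\beta^\top] = I$ together with $\mathbb{E}_\beta[\beta^\top M\beta] = \operatorname{tr}\!\big(M\,\mathbb{E}_\beta[\beta\,\beta^\top]\big)$, I get $\mathbb{E}_\beta[\mathbb{B}] = \operatorname{tr}(M)$; since $\Pi_{X_{\text{PCA}\perp}}$ is an orthogonal projection it is idempotent, so by cyclicity $\operatorname{tr}(M) = \operatorname{tr}(C_{xx}\Pi_{X_{\text{PCA}\perp}})$. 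Letting $v_1,\dots,v_p$ be the population eigenvectors of $C_{xx}$ and $\tilde v_1,\dots,\tilde v_k$ the top-$k$ empirical (PCA) directions spanning the range of $X_{\text{PCA}}$, I write $\Pi_{X_{\text{PCA}\perp}} = I - \sum_{i=1}^k \tilde v_i\tilde v_i^\top$ and expand each quadratic form in the population eigenbasis to obtain
\begin{equation*}
\mathbb{E}_\beta[\mathbb{B}] = \sum_{j=1}^p \lambda_j - \sum_{i=1}^k \sum_{j=1}^p \lambda_j\,\langle \tilde v_i, v_j\rangle^2 .
\end{equation*}

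Next, I isolate the ``aligned'' part of the double sum. Bounding the diagonal contribution by $\lambda_i\langle \tilde v_i, v_i\rangle^2 \le \lambda_i$ and subtracting $\sum_{i=1}^k\lambda_i$ leaves
\begin{equation*}
\mathbb{E}_\beta[\mathbb{B}] \ge \sum_{j=k+1}^p \lambda_j - \sum_{i=1}^k S_i, \qquad S_i := \sum_{j\neq i}\lambda_j\,\langle \tilde v_i, v_j\rangle^2 ,
\end{equation*}
so it suffices to show that, with the stated probability, each $S_i < t$, making the total cross-term contribution at most $kt$. To estimate $S_i$ I invoke the top-$k$ eigenspace alignment result of \cite{loukas2017close} (valid under assumption \eqref{assumption2}), which bounds the expected squared overlap of an empirical eigenvector with a non-matching population eigenvector by $\mathbb{E}[\langle \tilde v_i, v_j\rangle^2]\le 4k_j^2/\big(n(\lambda_i-\lambda_j)^2\big)$ for $i\neq j$, where $k_j^2 = \lambda_j(\lambda_j + \operatorname{tr}(C_{xx}))$. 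Summing against the weights $\lambda_j$ gives $\mathbb{E}[S_i]\le \sum_{j\neq i} 4\lambda_j k_j^2/\big(n(\lambda_i-\lambda_j)^2\big)$.

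Finally, since $S_i\ge 0$, Markov's inequality yields $\Pr[S_i\ge t]\le \mathbb{E}[S_i]/t \le \sum_{j\neq i} 4\lambda_j k_j^2/\big(nt(\lambda_i-\lambda_j)^2\big)$, and a union bound over $i=1,\dots,k$ produces exactly the failure probability $\sum_{i=1}^k\sum_{j\neq i}4\lambda_j k_j^2/\big(nt(\lambda_i-\lambda_j)^2\big)$ appearing in the statement. On the complementary event all $S_i<t$, whence $\mathbb{E}_\beta[\mathbb{B}]\ge \sum_{j=k+1}^p\lambda_j - kt$, as claimed. I expect the main obstacle to be the second step: extracting from \cite{loukas2017close} the cross-overlap expectation bound in precisely the form $4k_j^2/\big(n(\lambda_i-\lambda_j)^2\big)$ and confirming that \eqref{assumption2} is the hypothesis under which it holds; the trace reduction and the Markov/union-bound step are routine once this alignment estimate is in hand. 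A minor point worth checking is that the sign ambiguity in the choice of eigenvectors is harmless here, since only the squared overlaps $\langle \tilde v_i, v_j\rangle^2$ enter the computation.
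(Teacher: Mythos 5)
Your proposal is correct and follows essentially the same route as the paper: reduce $\mathbb E_{\beta}[\mathbb B]$ to $\operatorname{tr}\left(C_{xx}(I-\Pi_{X_{\text{PCA}}})\right)$ via isotropy, expand in the two eigenbases, bound the aligned terms by $\lambda_i$, and control each cross-term sum $S_i$ with the Loukas alignment bound plus a union bound over $i=1,\dots,k$. The only cosmetic difference is that you re-derive the per-$i$ tail bound from an expectation estimate via Markov's inequality, whereas the paper invokes Corollary 4.1 of \cite{loukas2017close} directly, which is precisely that Markov-type statement packaged as a weighted-sum tail bound under assumption \eqref{assumption2}.
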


The proof of Theorem \ref{thm.risk3} is in Appendix \ref{app.proof3}.
Theorem \ref{thm.risk2} shows that the variance of PCA-OLS depends on the spectral gap for the leading $k$ eigenvalues: larger spectral gap yields better control of the variance. Theorem \ref{thm.risk3} illustrates that choosing large $k$ decreases the bias on average; it also shows that the larger the spectral gap, the smaller the constant $t$ can be chosen, and thus the tighter the lower bound. As we shall see in Section \ref{subsec:oracle}, this risk bound has the same leading order term as oracle-PCR.

We remark that for a non-vacuous probability bound, we require $\lambda_i \gg \lambda_j \approx 0$ for $i=1, \cdots, k, \, j=,k+1, \cdots, p$ (i.e., in the gapped covariance model, or exponential decay model), such that the terms associated with $j = k+1, \cdots, p$ vanish:
\begin{align}
   \sum_{i=1}^k \sum_{j \neq i} \frac{4 \lambda_{j} k_{j}^{2}}{n t\left(\lambda_{i}-\lambda_{j}\right)^{2}}
   &\approx  \sum_{i=1}^k \sum_{j=1}^k \frac{4 \lambda_{j}^2 \, \left(\lambda_{j}+\operatorname{tr}(C_{xx})\right)}{n t\left(\lambda_{i}-\lambda_{j}\right)^{2}}. \label{eqn:prob_vac2}
\end{align}
Since $k$ is fixed, the numerator is dominated by $\operatorname{tr}(C_{xx})$, which is $o(n)$ given that $r_0(C_{xx}) = o(n)$ and $\lambda_1$ is fixed. Thus, \eqref{eqn:prob_vac2} tends to 0 (slowly) when $n \to \infty$. 

We note that \cite{wahl2019note} also provides a non-asymptotic upper bound for the risk of PCA-OLS using perturbation analysis. However, they rely on the results from the upper bounds on the excess risk of principal component analysis---the difference between using the empirical eigen-projectors and the population eigen-projectors. In addition to perturbation arguments, we use the notion of effective rank to characterize the data model, which is particularly useful in analyzing the overparameterized setting.

\section{Comparison of PCA-OLS with other projection-based methods} \label{sec:comparison}

\subsection{Oracle-PCR} \label{subsec:oracle}
Xu and Hsu \cite{xu2019number} analyzed the double-descent phenomenon of performing principal component regression using the population covariance matrix (instead of the empirical covariance as in PCA-OLS), which we call oracle-PCR. They considered $\beta$ as random to derive the asymptotic risk. Wu and Xu \cite{wu2020optimal} extended the analysis of oracle-PCR to more general setting of $\beta$ (i.e., either fixed or random) and its alignment with the eigenvalues of $C_{xx}$.

We sketch the risk for oracle-PCR, following equation \eqref{eqn:pca_var} by replacing the empirical principal components with their population counterparts. Since the data is generated from a Gaussian distribution, we can assume $C_{xx} = \operatorname{diag}(\lambda_1, \cdots, \lambda_p)$ without loss of generality. Thus, when $C_{xx}$ is diagonal with $\lambda_1 \ge \cdots \ge \lambda_p$, oracle-PCR essentially truncates the design matrix $X \in \R^{n \times p}$ to its first $k$ columns. In other words, $X \, \Pi_{\text{oracle-PCR}} = X_{[:k]} \in \R^{n \times k}$, where $X_{[:k]}$ denotes the submatrix of the first $k$ columns of $X$. Let $\hat{\beta}_{k} = X_{[:k]}^{\dagger} \, Y \in \R^{k}$ and $\vec{0} \in \R^{p-k}$. The estimator is given by:
\begin{equation}
    \hat{\beta}_{\text{oracle-PCR-k}} = [\hat{\beta}_{k}, \vec{0}] \in \R^{p}.
\end{equation}

Let $C_{xx, [k:]} \in \R^{(p-k) \times (p-k)}$ be the principal submatrix of $C_{xx}$ by deleting the first $k$ rows and columns. Let $X_{[:k]}$ be the submatrix of the first $k$ columns of $X$. Recall that $C_{xx} = \sum_{j=1}^p \lambda_j u_j \tr{u_j}$, and $\beta = C_{xx}^{-1} C_{xy} \equiv \sum_{j=1}^p b_j u_j$. 
Let the empirical eigenvalues of $\frac{1}{n}\,\tr{X_{[:k]}} X_{[:k]}$ be $\tilde{\lambda}_1' \ge \cdots \ge \tilde{\lambda}_n'$, and $\tilde{u}_i'$ be the empirical eigenvectors $\tilde{\lambda}_i'$. We have

\begin{align}
    \mathcal R(\hat\beta_{\text{oracle-PCR,k}}\given X) &= \beta^\top\, C_{xx, [k:]}\,\beta + \frac{\sigma^2}{n} \operatorname{tr}\left((\frac{1}{n}\,\tr{X_{[:k]}} X_{[:k]})^\dagger\,C_{xx}\right)
    + \sigma^2  \\
    &= \sum_{i=k+1}^p b_i^2 \lambda_i + \frac{\sigma^2}{n}  \operatorname{tr}\left( \left(\sum_{i=1}^k \frac{1}{\tilde{\lambda}_i'} \tilde{u}_i' \tr{\tilde{u}_i'} \right)  \left(\sum_{j=1}^p \lambda_j u_j \tr{u_j}  \right) \right) + \sigma^2  \label{eqn:oracle-pcr}.
\end{align}

Consider the model in Theorem \ref{thm.risk3} by treating $\beta$ as random where $\mathbb E[b_i^2]=1$. The bias of oracle-PCR (i.e., first term in \eqref{eqn:oracle-pcr}) has expected value $\sum_{i=k+1}^p \lambda_i$, similar to PCA-OLS (see Theorem \ref{thm.risk3}). The bias also illustrates the alignment between the coefficients of $\beta$ and the principal components: if they are misaligned in the sense that $b_i$ is large for $i = k+1, \cdots, n$, then the bias is large. 

Moreover, oracle-PCR tends to have larger variance than PCA-OLS. Indeed, the submatrix $X_{[:k]} \in \R^{n \times k}$ has smaller empirical singular values than the original data $X$, $\tilde{\lambda}_i' \leq \tilde{\lambda}_i$ for $i=1,\cdots,k$. This is a consequence of the interlacing theorem of singular values \cite{queiro1987interlacing}.
 
However, it is possible that the empirical eigenvectors of oracle-PCR (i.e. $\tilde{u}_i'$) have better alignment with the population eigenvectors (i.e. $u_i$) than those of PCA-OLS (i.e. $\tilde{u}_i$). Thus, it is not clear whether PCA-OLS always has a smaller variance. Nevertheless, when $k \approx n$, oracle-PCR works with an ill-conditioned $X_{[:k]}$, while PCA-OLS works with a rank-$k$ matrix $X_{\text{PCA,k}}$ that is well-conditioned, given that the effective rank $r_0 (C_{xx}) = o(n)$ and the $k$-th largest population eigenvalue is bounded away from $0$ (see Theorem \ref{thm.risk}). 

Remarkably, even when the true population covariance is known, use of the empirical covariance could yield better performance; we demonstrate this in Section \ref{subsec:exp2}.

\subsection{Other projection methods} \label{subsec:other_proj}

We briefly discuss a few other projection methods.

\paragraph{Parial Least Squares (PLS):} 
Similar to PCA-OLS, PLS is a data-dependent dimensionality reduction method. Helland and Almøy in \cite{helland1994comparison} compared the asymptotic performance of PCA-OLS versus PLS in the underparameterized regime where $p$ is fixed and $n \to \infty$ . They analyzed the alignment of data features and the regression coefficient via the notion of eigenvalue \textit{relevance}: an eigenvalue is irrelevant for the regression if it corresponds to the principal component that has small correlation with the dependent variable $Y$. They showed that PCA-OLS performs well when the irrelevant eigenvalues are extremely small or extremely large, while PLS does well for intermediate irrelevant eigenvalue. In the overparamaterized regime, Cook et al. analyzed PLS for various alignments of $n,p$ in the asymptotic regime \cite{cook2019partial}. They showed that PLS achieves its best performance in data models with many weak features (i.e., abundant regression).
 

\paragraph{Random orthogonal projection:} 
 Another feature extraction method is random orthogonal projection. The recent work \cite{lin2020causes} provides a very thorough analysis of the risk in this case, interpreting this model as a two-layer linear neural network, where the first layer is a random orthogonal projection (not trained) and the second layer performs ridge regression. 
 
In the isotropic case where $C_{xx}$ is the identity, random orthogonal projections are equivalent to performing oracle-PCR: Since all the population eigenvalues are the same, choosing the first $k$ principal components in oracle-PCR is equivalent to randomly select $k$ orthogonal directions in random orthogonal projection.

\paragraph{Random Gaussian projection:}

In \cite{Ba2020Generalization}, Ba et al. analyzed the asymptotic risk of random Gaussian projections in the isotropic covariance case, assuming that both the data $X$ and the projection matrix $\Pi$ are generated from a Gaussian distribution with zero mean and identity covariance. 
More quantitative comparisons with PCA-OLS can be found in Supplementary \ref{supple:rand_proj}.

\subsection{Data models with increasing number of features and the choice of \texorpdfstring{$k$}{k}} \label{subsec:data_models} 

Increasing number of input features can improve generalization for certain data models \cite{bartlett2020benign} and methods (such as PLS in abundant regression), but this is not always true. There are some explicit examples, like the one in Section 3 of \cite{jain2000statistical} where an increasing number of features deteriorates the performance. 

In practice, the number of features can be made arbitrarily large when fitting the original input features with a flexible functional form, such as a data-independent projection method, or a basis-function expansion (for example, a polynomial or a Fourier series, like the model analyzed in \cite{xie2020weighted} and \cite{hogg2021fitting}). In the basis-function expansion setting, each data point $y_i$ is associated with a (scalar or vector) location $t_i$ in some ambient space $\R^p$, and the new features $[X]_{ij}$ are created by basis-function evaluations $g_j(t_i)$. The expansion of the locations $t_i$ into the feature matrix $X \in \R^{n \times k}$ can be viewed as a feature embedding. Hence, the number of functions $g_j(t)$ represents the number of features $k$; if the basis is infinite, $k$ can be made arbitrarily large. Another popular non-parametric method, Gaussian Processes (GPs), can also have infinite number of parameters; any GP can be seen as the limit as $k\rightarrow\infty$ of a specifically created and regularized least-squares, with Mercer's Theorem and the kernel trick converting the $k$-sums in the outer product $X\,X^\top$ into kernel-function evaluations \cite{li2021generalization,hogg2021fitting}.

Choosing $k$ is an active area of data science research: it appears in the context of projection-based methods, basis-function expansion, and the architectural design of neural networks. In practice, it is typically addressed using cross-validation. The choice of $k$ in PCA-OLS has been studied extensively: from a feature selection perspective, \cite{jolliffe1986principal} reviewed some iterative procedures to select variables; from a model selection perspective, \cite{hoyle2008automatic} discussed an improved Bayesian model evidence criteria to select the number of components in a high-dimensional, small sample size setting.

\section{Adversarial attacks in linear regression}\label{sec:attack}

Adversarial attacks are a very interesting phenomenon that was discovered in the context of deep learning, where imperceptible perturbations of the data can produce huge changes in the predictions of classifiers \cite{szegedy2014intriguing}.  
 
Recent work proves that adversarial examples are ubiquitous for classification in the overparameterized regime  \cite{belkin2018overfitting}: Assuming non-zero label noise, the set of adversarial examples is asymptotically dense in the support of data, so there is an adversarial example arbitrarily close to any data point. 

In the context of regressions, many kinds of adversarial attacks had been studied as well  \cite{biggio2012poisoning, Li_2020, li2020optimal,javanmard2020precise,advPCA}. 
 In the robust regression literature, most studies rely on the assumption that the feature vector $\beta$ is sparse \cite{xu2012robust, chen2013robust, robust_linear_regression}. For example, \cite{chen2013robust} showed that data poisoning attacks can completely change the support of $\beta$ in sparse regression. Here, we provide straightforward analysis that does not require sparsity assumptions, which is naturally hinted in our discussion of data models with large number of weak features (see Section \ref{subsec:data_models}).

In what follows, we consider only a ``data-poisoning'' attack, in which the attacker adds a single data point $(x_0, y_0)$ to the training data with the goal of having it (dramatically) increase the risk on test data.

\begin{defn}(Data-poisoning attack).
Let $(X,Y)$ be the original training data. Let us consider an additional adversarial pair $(x_0,y_0)$, and let:
$$
\tilde{X} = 
\begin{bmatrix}
X \\
\tr{x}_0
\end{bmatrix} \in \mathbb R^{(n+1) \times p},
\quad
\tilde{Y} = 
\begin{bmatrix}
Y \\
y_0
\end{bmatrix}
\in \mathbb R^{n+1}.
$$
The attacker's goal is to maximize the empirical risk subject to the constraints on both $x_0$ and $y_0$ (note that $x,y$ might be in different units, so we shall apply the constraint separately).
\begin{equation}
 \max_{\norm{x_0} \le \epsilon, \norm{y_0} \le \epsilon} \| \tilde{Y} - \tilde{X}\beta \|^2 .  \label{eqn:poison_goal}
\end{equation}
\end{defn}

We find that, for ordinary (unregularized) least squares, in the overparameterized regime, data-poisoning attacks can be arbitrarily successful. The fundamental reason is that the new $p$-dimensional data point $x_0$ can lie very near (but not precisely in) the $n$-dimensional subspace spanned by the extant data $X$; it then increases dramatically the condition number of the empirical covariance $X^\top\,X$ and makes the ordinary least-squares regression arbitrarily sensitive to the training labels $Y$ and $y_0$. This is in contrast to the underparameterized regime, in which the effect of the attack is limited \cite{li2020optimal}. The following propositions are very simple. Their proofs are in the Supplementary Material.

\begin{prop} \label{prop:attack}
In the overparameterized regime, ordinary least squares is arbitrarily sensitive to data-poisoning attacks. The risk tends to infinity when the additional adversary feature $x_0$ is arbitrarily close to the column space of $X$. In other words, let $\tilde{X}= \tr{[\tr{X}; x_0]}$. If $x_0 = \sum_{i=1}^n \alpha_i v_i + \delta $ where $\operatorname{Col}(X) = \operatorname{span}\{v_1,\cdots,v_n\}$, then $\mathcal R(\hat\beta\given \tilde{X}) \to \infty$ when $\delta\to 0$.
\end{prop}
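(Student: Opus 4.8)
The plan is to isolate the variance term of the over-parameterized risk decomposition \eqref{eqn:risk_ols}, now applied to the augmented matrix $\tilde X \in \R^{(n+1)\times p}$, and to show that it alone diverges as $\delta \to 0$ while the bias stays bounded. Since $p > n+1$, the min-norm estimator is $\hat\beta = \tilde X^\dagger \tilde Y$, whose conditional covariance is $\sigma^2 (\tilde X^\top \tilde X)^\dagger$, so the variance term is
\[
\mathbb V = \sigma^2\, \operatorname{tr}\!\big((\tilde X^\top \tilde X)^\dagger\, C_{xx}\big).
\]
The entire argument then reduces to controlling the smallest non-zero singular value $s_{n+1}(\tilde X)$ sitting inside this pseudo-inverse.

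First I would observe that at $\delta = 0$ the poisoned feature $x_0 = \sum_{i=1}^n \alpha_i v_i$ lies in $\operatorname{span}\{v_1,\dots,v_n\}$, i.e.\ in the row space of $X$. The appended row $x_0^\top$ is then linearly dependent on the existing rows, so $\operatorname{rank}(\tilde X) = n$ and hence $s_{n+1}(\tilde X) = 0$. Because singular values are continuous functions of the matrix entries, $s_{n+1}(\tilde X) \to 0$ as $\delta \to 0$; in fact $s_{n+1}(\tilde X)$ is comparable to $\norm{\delta}$, the norm of the only part of $x_0$ orthogonal to the row space of $X$.

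Next I would diagonalize. Writing the spectral decomposition $\tilde X^\top \tilde X = \sum_{i=1}^{n+1} s_i^2\, \tilde u_i \tilde u_i^\top$ in terms of the right singular vectors $\tilde u_i$, the pseudo-inverse is $\sum_{i=1}^{n+1} s_i^{-2}\, \tilde u_i \tilde u_i^\top$, so
\[
\operatorname{tr}\!\big((\tilde X^\top \tilde X)^\dagger\, C_{xx}\big)
= \sum_{i=1}^{n+1} \frac{\tilde u_i^\top\, C_{xx}\, \tilde u_i}{s_i^2}
\;\ge\; \frac{\tilde u_{n+1}^\top\, C_{xx}\, \tilde u_{n+1}}{s_{n+1}^2}
\;\ge\; \frac{\lambda_p}{s_{n+1}^2},
\]
using $C_{xx} \succeq \lambda_p I$ with $\lambda_p = \lambda_{\min}(C_{xx}) > 0$ and $\norm{\tilde u_{n+1}} = 1$. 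Since the numerator is bounded below while $s_{n+1}^2 \to 0$, we get $\mathbb V \to \infty$. The bias $\beta^\top\, \Pi_{\tilde X_\perp}\, C_{xx}\, \Pi_{\tilde X_\perp}\, \beta$ is non-negative and stays bounded above by $\lambda_1 \norm{\beta}^2$ (since $\Pi_{\tilde X_\perp}$ is an orthogonal projection and $C_{xx} \preceq \lambda_1 I$), and therefore $\mathcal R(\hat\beta \mid \tilde X) \to \infty$.

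In line with the paper's remark that these propositions are ``very simple,'' there is no deep obstacle; the two points that require care are (i) upgrading the qualitative claim $s_{n+1}(\tilde X)\to 0$ into a quantitative one, which one gets either by continuity of singular values or by a direct estimate yielding the blow-up rate $\sim \norm{\delta}^{-2}$, and (ii) the reliance on $\lambda_p = \lambda_{\min}(C_{xx}) > 0$. If $C_{xx}$ is allowed to be rank-deficient, the clean bound $\lambda_p/s_{n+1}^2$ fails and one must instead verify that the singular vector $\tilde u_{n+1}$ retains non-vanishing overlap with the range of $C_{xx}$ as $\delta \to 0$.
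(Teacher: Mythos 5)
Your proof is correct, but it takes a genuinely different route from the paper's. The paper works with the Gram matrix $\tilde X\tilde X^\top$ and inverts it blockwise: the Schur complement $h := x_0^\top\bigl(I - X^\top(XX^\top)^{-1}X\bigr)x_0$ is exactly the squared norm of the component of $x_0$ orthogonal to the span of the rows of $X$, the entries of $(\tilde X\tilde X^\top)^{-1}$ are functions of $1/h$, and since $h \to 0$ as $\delta \to 0$ the estimator $\hat\beta_{\mathrm{poison}} = \tilde X^\top(\tilde X\tilde X^\top)^{-1}\tilde Y$---and hence the risk---blows up; the paper also normalizes $x_0$ so that the attack respects the budget $\|x_0\| \le \epsilon$. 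You instead bound the risk itself: you isolate the variance term $\sigma^2\operatorname{tr}\bigl((\tilde X^\top\tilde X)^\dagger C_{xx}\bigr)$, lower-bound it by $\sigma^2\lambda_p/s_{n+1}^2(\tilde X)$, and drive $s_{n+1}(\tilde X)\to 0$ by continuity of singular values; in fact the two mechanisms are linked, since $s_{n+1}(\tilde X)^2 \le h$, so your bound is if anything sharper. What each buys: your route directly controls the quantity named in the statement (the risk rather than the estimator), and it makes explicit where invertibility of $C_{xx}$ enters ($\lambda_p>0$, which the paper's setup guarantees since $\beta = C_{xx}^{-1}C_{xy}$), a caveat you correctly flag; the paper's route exposes the attack mechanism more concretely and gives the practical recipe for building $x_0$ under the norm constraint. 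One point of informality shared by both arguments: the poisoned label $y_0$ is adversarial rather than model-generated, so strictly speaking the $(n{+}1)$-sample bias--variance decomposition (or, in the paper's case, the passage from a blown-up estimator to a blown-up risk) is applied as if $y_0$ followed the model; this is harmless here since, as the paper notes, $x_0$ alone drives the attack, but it is worth stating explicitly if you want a fully rigorous version.
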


In contrast, to make successful data-poisoning attack in the underparameterized regime, $\tr{X} X$ has to be ill-conditioned.

\begin{prop} \label{prop:ols-under}
In the underparameterized regime, ordinary least squares is arbitrarily sensitive to data-poisoning attack if the smallest eigenvalue of $\tr{X} X$ is smaller than the attack strength $\epsilon$.
\end{prop}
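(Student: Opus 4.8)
The plan is to track how a single poisoning point perturbs the underparameterized OLS solution through a rank-one update, and to show that the perturbation is amplified by the reciprocal of the smallest eigenvalue of $\tr{X}X$. Throughout we stay underparameterized ($p<n$, hence $p<n+1$ after the attack), so the poisoned estimator is $\hat\beta_{\mathrm{poison}} = (\tr{\tilde X}\tilde X)^{-1}\tr{\tilde X}\tilde Y$, and the relevant quantities are the rank-one updates $\tr{\tilde X}\tilde X = \tr{X}X + x_0\tr{x_0}$ and $\tr{\tilde X}\tilde Y = \tr{X}Y + x_0 y_0$. First I would apply the Sherman--Morrison formula to $(\tr{X}X + x_0\tr{x_0})^{-1}$ and simplify, which yields the classical single-point influence (leverage) identity
\[
\Delta := \hat\beta_{\mathrm{poison}} - \hat\beta_{\OLS}
= \frac{y_0 - \tr{x_0}\hat\beta_{\OLS}}{1 + \tr{x_0}(\tr{X}X)^{-1}x_0}\,(\tr{X}X)^{-1}x_0 .
\]
This exact formula isolates the mechanism of the attack: the displacement lies along $(\tr{X}X)^{-1}x_0$ and its size is governed by how large $(\tr{X}X)^{-1}$ can make that vector.

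Next I would optimize the attacker's choice. Writing $A=\tr{X}X$ and letting $v$ be a unit eigenvector attaining the smallest eigenvalue $\lambda_{\min}$, aligning the attack with this least-excited direction, $x_0 = s\,v$ with $|s|\le\epsilon$, gives $A^{-1}x_0 = (s/\lambda_{\min})\,v$ and $\tr{x_0}A^{-1}x_0 = s^2/\lambda_{\min}$, so $\|\Delta\| = |y_0 - s\,\tr{v}\hat\beta_{\OLS}|\,\tfrac{|s|}{\lambda_{\min}+s^2}$. Taking $y_0=\epsilon$ at the boundary of the budget and optimizing the amplitude $s$ shows the maximizer is $s^\star = \sqrt{\lambda_{\min}}$ (feasible once $\sqrt{\lambda_{\min}}\le\epsilon$), at which point $\|\Delta\|$ is of order $\epsilon/\sqrt{\lambda_{\min}}$. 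This is the crux: the perturbation grows without bound as $\lambda_{\min}\to0$, and its amplification exceeds order one precisely once $\lambda_{\min}$ falls below the attack scale, which is the regime $\lambda_{\min}(\tr{X}X)<\epsilon$ of the statement. To convert this into a risk increase I would use that the conditional risk of any fixed coefficient vector $b$ equals $\mathbb E_{x_\ast}[(x_\ast^\top(b-\beta))^2]+\sigma^2 = (b-\beta)^\top C_{xx}(b-\beta)+\sigma^2$; setting $b=\hat\beta_{\OLS}+\Delta$ produces the adversarial term $\tr{\Delta}C_{xx}\Delta$, which diverges with $\|\Delta\|^2\sim \epsilon^2/\lambda_{\min}$ as long as $C_{xx}$ places nonzero weight on the chosen direction. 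For the contrast with Proposition~\ref{prop:attack}, I would note that if $A$ is well-conditioned ($\lambda_{\min}\ge\epsilon$) then $\|(\tr{X}X)^{-1}x_0\|\le\epsilon/\lambda_{\min}\le1$ and $|y_0-\tr{x_0}\hat\beta_{\OLS}|$ is bounded, so $\Delta$ and the induced risk change are uniformly bounded over all admissible attacks.

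The main obstacle is conceptual rather than computational. Because $\tr{\tilde X}\tilde X = \tr{X}X + x_0\tr{x_0}$ is a positive semidefinite update, adding a point can only \emph{increase} the eigenvalues of the design Gram matrix; so, unlike the overparameterized case of Proposition~\ref{prop:attack}, the attack does not operate by degrading the conditioning of the matrix being inverted. The harm comes entirely through the adversarial label $y_0$, leveraged by a \emph{pre-existing} small eigenvalue of $\tr{X}X$, and making this precise requires the non-obvious observation that the optimal attack does not spend its full budget along the least-excited direction but tunes the amplitude to $\sqrt{\lambda_{\min}}$ so as to maximize the leverage-weighted influence $\tfrac{|s|}{\lambda_{\min}+s^2}$. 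A secondary technicality is relating the empirical eigendirection $v$ (an eigenvector of $\tr{X}X$) to the population weighting $C_{xx}$ in the risk; it suffices to verify that the attacked direction carries non-negligible population variance, after which the bias term $\tr{\Delta}C_{xx}\Delta$ inherits the blow-up of $\|\Delta\|$.
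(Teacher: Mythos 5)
Your proof is correct, and it takes a genuinely different---and more careful---route than the paper's. The paper's own proof is only three sentences: it observes that $\tr{X}X$ is full rank, that the poison adds the rank-one PSD matrix $x_0\tr{x_0}$ with $\norm{x_0}^2\le\epsilon^2$, and then asserts that when $\lambda_{\min}(\tr{X}X)<\epsilon$ the attack ``can push the smallest eigenvalue of $\tr{X}X+x_0\tr{x_0}$ infinitely close to $0$,'' whence the risk blows up. You instead derive the exact Sherman--Morrison influence identity, align $x_0$ with the minimal eigendirection, tune the amplitude to $s^\star=\sqrt{\lambda_{\min}}$, and obtain the quantitative blow-up $\norm{\Delta}\asymp\epsilon/\sqrt{\lambda_{\min}}$, converted to risk divergence via $\tr{\Delta}C_{xx}\Delta\ge\lambda_p(C_{xx})\norm{\Delta}^2$ (legitimate, since $C_{xx}$ is invertible in the paper's setup). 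What your route buys is real: your remark that a PSD rank-one update can only raise eigenvalues is exactly right, and it shows the paper's literal mechanism cannot occur---$\lambda_{\min}(\tr{X}X+x_0\tr{x_0})\ge\lambda_{\min}(\tr{X}X)$, so the attack cannot degrade the conditioning of the matrix being inverted; the ill-conditioning must pre-exist, and the damage enters through the label term $y_0-\tr{x_0}\hat\beta_{\OLS}$ amplified along $(\tr{X}X)^{-1}x_0$, which is precisely what your formula isolates. Your analysis also surfaces a dimensional wrinkle in the statement itself: the natural threshold is $\lambda_{\min}\lesssim\epsilon^2$ (your feasibility condition $\sqrt{\lambda_{\min}}\le\epsilon$), not $\lambda_{\min}<\epsilon$; when $\epsilon<1$ and $\epsilon^2<\lambda_{\min}<\epsilon$ your amplification factor $\epsilon/(\lambda_{\min}+\epsilon^2)$ stays bounded, so the proposition should be read---as you read it---as divergence of the attainable risk as $\lambda_{\min}\to0$. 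Two small repairs would make your argument airtight: choose the sign of $y_0=\pm\epsilon$ opposite to that of $s\,\tr{v}\hat\beta_{\OLS}$ so the numerator is at least $\epsilon$ and never cancels; and since $\hat\beta_{\OLS}$ is random through $Y$, either condition on $Y$ or note that the expected squared numerator is at least its squared mean, so the risk in the paper's conditional sense still diverges.
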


\begin{cor}\label{cor:pca-attack}
If the $k$-th largest eigenvalue of $\tr{X} X$ is much greater than $\epsilon$, then PCA-OLS is robust to data-poisoning attack.
\end{cor}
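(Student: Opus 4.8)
The plan is to trace the attack through the risk decomposition \eqref{eqn:pca_var} and show that, unlike ordinary least squares in Proposition \ref{prop:attack}, PCA-OLS never pseudo-inverts the dangerously small singular value that the adversary tries to manufacture. First I would recall \emph{why} the attack hurts OLS: in the overparameterized regime the adversary places $x_0$ almost inside $\operatorname{Col}(X)$, which introduces a tiny but nonzero singular value whose reciprocal enters the full pseudo-inverse and drives the variance term of the risk to infinity. The corollary's hypothesis is designed precisely to neutralize this mechanism, so my strategy is to show that the only eigenvalues PCA-OLS ever inverts are the top $k$, and that these cannot be made small by a bounded attack.

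The central quantitative step is eigenvalue perturbation. Appending the row $\tr{x_0}$ with $\norm{x_0}\le\epsilon$ replaces the empirical Gram matrix by $\tr{\tilde X}\tilde X = \tr{X}X + x_0\tr{x_0}$, a rank-one positive semidefinite perturbation of operator norm $\norm{x_0}^2\le\epsilon^2$. By Weyl's inequality every eigenvalue moves by at most $\epsilon^2$, so in particular $\lambda_k(\tr{\tilde X}\tilde X)\ge \lambda_k(\tr{X}X)-\epsilon^2$. Since PCA-OLS pseudo-inverts only the top $k$ eigenvalues of the design, the smallest quantity it ever inverts is exactly this $k$-th largest eigenvalue; under the hypothesis $\lambda_k(\tr{X}X)\gg\epsilon$ it stays bounded away from zero, so the effective condition number of the matrix being pseudo-inverted remains $O(1)$. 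The small singular value the adversary introduces lands at position $k+1$ or below and is simply truncated away by the PCA step.

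To convert eigenvalue stability into a risk bound I would reuse the Von Neumann trace argument behind Theorem \ref{thm.risk}. Evaluated on $\tilde X$ (now $n+1$ samples), the variance term of \eqref{eqn:pca_var} obeys
\[
\mathbb V \;\le\; \frac{\sigma^2 \sum_{i=1}^k \lambda_i(C_{xx})}{\lambda_k(\tr{\tilde X}\tilde X)} \;\le\; \frac{\sigma^2 \sum_{i=1}^k \lambda_i(C_{xx})}{\lambda_k(\tr{X}X)-\epsilon^2},
\]
which is finite precisely because $\lambda_k(\tr{X}X)\gg\epsilon$. The bias term $\beta^\top\,\Pi_{X_{\text{PCA}\perp}}\,C_{xx}\,\Pi_{X_{\text{PCA}\perp}}\,\beta$ is bounded by $\lambda_1\norm{\beta}^2$ uniformly over \emph{any} orthogonal projection, so it can never blow up. Comparing with Proposition \ref{prop:attack} then yields the dichotomy asserted by the corollary: the very attack that sends the OLS risk to infinity leaves the PCA-OLS risk finite and controlled.

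The main obstacle, and the reason the statement is phrased qualitatively as ``robust'' rather than with an explicit stability constant, is that the principal subspace itself rotates under the perturbation, so the bias is not literally unchanged. A fully quantitative claim that the risk stays \emph{close} to its pre-attack value (not merely bounded) would require a Davis--Kahan / $\sin\theta$ bound on the top-$k$ eigenspace, valid only when $\epsilon$ is small relative to the spectral gap $\lambda_k-\lambda_{k+1}$, exactly the regime in which the earlier spectral-gap assumptions \eqref{assumption1}--\eqref{assumption2} apply. For the boundedness claimed here, however, the crude bias bound $\lambda_1\norm{\beta}^2$ suffices, and the Weyl estimate on $\lambda_k(\tr{\tilde X}\tilde X)$ is the only ingredient that really needs the hypothesis.
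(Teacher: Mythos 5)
Your proposal is correct and follows essentially the same route as the paper's (sketched) proof: both arguments rest on the observation that PCA-OLS only pseudo-inverts the top $k$ eigenvalues of the Gram matrix, and that the attack amounts to a rank-one PSD perturbation $x_0\tr{x_0}$ of operator norm at most $\epsilon^2$, which cannot push $\lambda_k(\tr{X}X)$ toward zero when $\lambda_k(\tr{X}X)\gg\epsilon$, so the adversarial direction is simply truncated away. Your write-up is in fact more complete than the paper's sketch --- the explicit Weyl estimate, the Von Neumann variance bound, the crude bias bound $\lambda_1\norm{\beta}^2$, and the honest remark that a Davis--Kahan argument would be needed to show the risk stays \emph{close} to (rather than merely bounded, unlike) its pre-attack value are all details the paper leaves implicit.
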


Proposition \ref{prop:ols-under} and Corollary \ref{cor:pca-attack} show that PCA-OLS are robust to data-poisoning attack in the natural PCA setting where the first $k$ eigenvalues have high energy. In other words, data-poisoning attack fails if the empirical covariance of the features is well-conditioned. The same reasoning tells us that ridge regression will be robust to this specific attacks.

We conjecture, and our experiments suggest, that this particular attack will also become bounded for the generative model.  

We emphasise that the attacks described in this Section are specifically tailored to exploit the vulnerability of unregularized ordinary least squares. It remains an open problem to study attacks tailored to the other regressions that maximizes the risk. For instance, one could presume an attack for PCA-OLS that imposes a structure in which $y$ is predicted by the low variance components of $x$.  
Nevertheless, PCA-OLS can easily detect such attack with an outlier-based defense strategy: If the attack aims to change the $k$-th largest empirical eigenvalue, then the adversarial pair must lie outside the rank-$k$ principal component subspace, and the magnitude of the attack must grow with $n$.
Thus, the poison point will appear as an outlier compared to the original data features, given that $n$ is large and the population covariance has most energies in the first $k$ components.

Our results also suggest new insights of using PCA as a data preprocessing to defend adversarial attacks in classification settings \cite{bhagoji_pca, carlini2017adversarial, alemany2020dilemma}.
\cite{alemany2020dilemma} showed that the effectiveness of PCA defense depends on choosing the correct number of principal components, in the sense that it matches the intrinsic dimension of the data manifold. This is transparent in our Corollary \ref{cor:pca-attack}: If PCA-OLS chooses a wrong $k$ that corresponds to a low variance components, the attack becomes considerably powerful.






\section{Numerical experiments}\label{sec:numerics}

\newlength{\panelwidth}
\setlength{\panelwidth}{0.4\textwidth}
\newlength{\panelheight}
\setlength{\panelheight}{0.3\textwidth}
\begin{figure}[t!]
    \centering
    \includegraphics[width=\panelwidth]{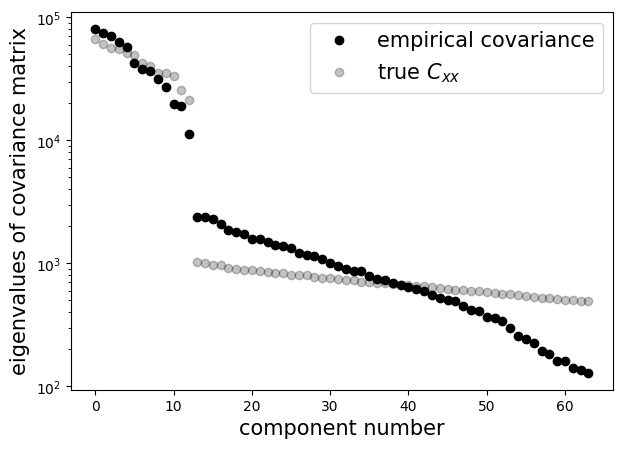}
    \includegraphics[width=\panelwidth]{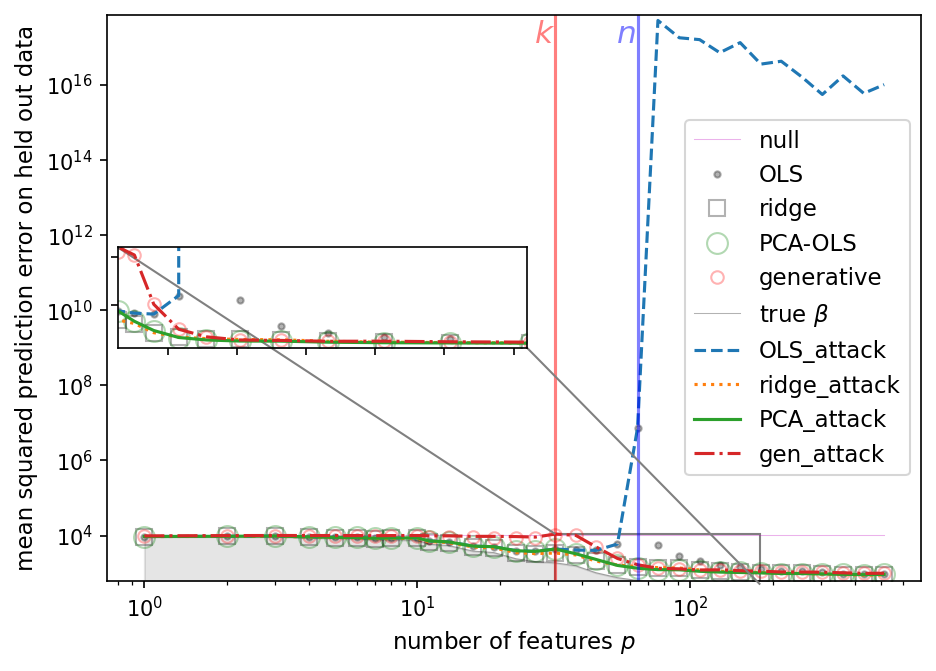}
    \caption{\textbf{(Left)}Eigenvalues of the empirical and model covariance $C_{xx}^p$ for $p=64$, with a gap at component number $k=32$. \textbf{(Right)} MSE for prediction of $y$ on test under data-poisoning attacks of magnitude $\epsilon=1$, with fixed $n=64$ and varying $p$. The attacks in OLS for $p>n$ are arbitrarily successful, while not so effective for other regularized methods.\label{fig:attack}}
    \centering
    \includegraphics[height=\panelheight]{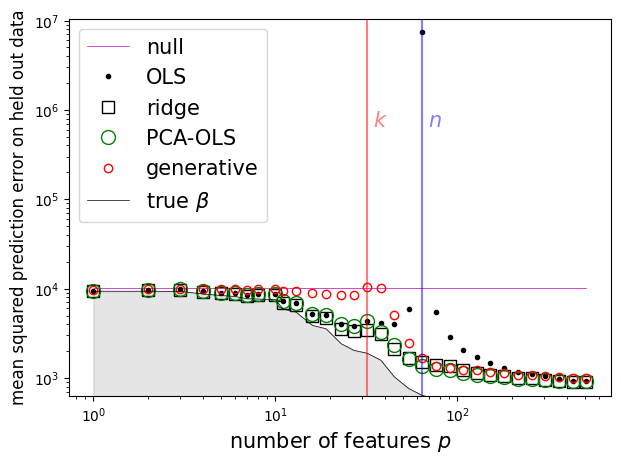}
        \includegraphics[height=\panelheight, trim={1.2cm 0  0 1cm}, clip]{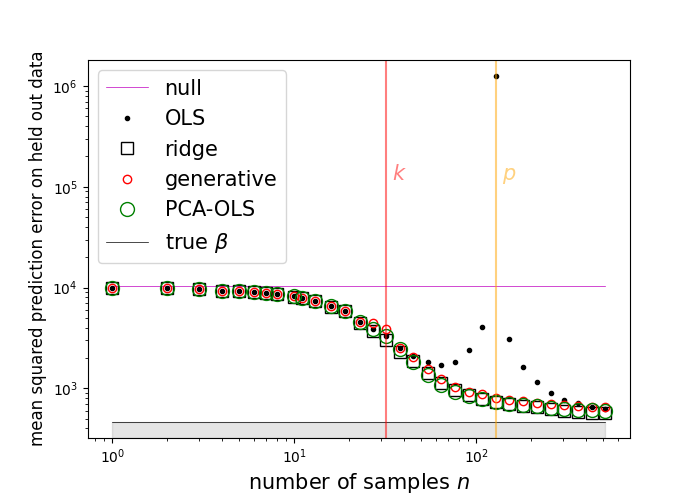}
    \caption{ \textbf{(Left)} MSE for prediction of $y$ on test sets. We consider different regression methods, namely OLS, ridge regression (with optimal ridge parameter found by cross-validation), the generative model described in Section \ref{sec:setup}, and OLS performed on a PCA projection of the data to dimension $k=32$. We also report the performance of the null estimator $\hat \beta =0$, and the true linear coefficient $\beta$ (delimiting the shaded region). In these experiments we fix the number of samples $n=64$, and we vary the number of features $p$ generating the data according to the model described in Section \ref{sec:numerics}. 
    \textbf{(Right)} MSE for prediction of $y$ with fixed number of features $p=128$ and varying the number of samples $n$. We observe that OLS exhibits the peaking phenomenon, whereas the other (regularized) estimators have practically the same, monotonically decreasing risk.
    }
    \label{fig:risk}
\end{figure}

We perform numerical experiments\footnote{Code available at: \url{https://github.com/nhuang37/dimensionality_reduction}} on data generated by the model described in Section \ref{sec:setup}. 

\subsection{Generalization and robustness as a function of \texorpdfstring{$n,p$}{n, p}}\label{subsec:exp1}

We consider two settings, one in which the number of features $p$ is fixed and the number of samples $n$ vary, and another in which the number of samples is fixed and the number of features vary. In order to define the latter we consider a large $(N+1)\times (N+1)$ covariance matrix $\Sigma_N$ for $N=512$ constructed as $W\,W^\top$ where $W$ is a random matrix with standard i.i.d Gaussian entries. We manipulate the eigenvalues of $\Sigma_N$ to produce a gap, where the top $k=32$ eigenvalues are larger than the rest (the manipulation is done by rescaling its top 32 eigenvalues to be 100 times larger).
We define $C_{xx}^p$ to be the first $p\times p$ block of $\Sigma_N$. Since the eigenvectors of $\Sigma_N$ are incoherent with the axis, we obtain that $C_{xx}^p$ exhibits the same structure, as illustrated in Figure \ref{fig:attack} (left).


In Figures \ref{fig:risk} and \ref{fig:attack} (right) we report the mean square prediction error (MSE), defined as 
\begin{equation} \label{mse}
    \operatorname{MSE}(\hat\beta, X_{\text{test}}, Y_\text{test})=\frac{1}{T}\sum_{t=1}^T \sum_{(x^*_i,y^*_i) \in (X^*_t,Y^*_t)} \frac{1}{n_{\text{test}}} \|x^*_i\hat \beta_t - y^*_i \|^2 \, ,
\end{equation}
where $X_{\text{test}}, Y_\text{test}$ denote the test sets with $T$ trials, and each trial $(X^*_t,Y^*_t)$ consists of $n_{\text{test}}$ of data samples. We choose $T=16, n_{\text{test}} = 256$ in our simulations.

For estimators $\hat\beta_{\text{OLS}},\, \hat\beta_{\text{generative}}, \hat\beta_{\lambda}$ (where the ridge optimal regularization parameter $\lambda$ is found with cross-validation), and $\hat\beta_{\text{PCA},k}$ for $k=32$. We compare with the prediction error of the true $\beta$ and with the prediction error of the null estimator $\hat\beta=0$. 

In Figure \ref{fig:risk} we fix number of samples $n=64$ and vary the number of features $p$. We observe that OLS exhibits the peaking phenomenon, whereas the other regularized methods do not. In the overparameterized regime, the risks of all the estimators coincide and consistently decrease with $p$. The generative model fails to predict the data when the number of features is less than $k$. This is expected as the latent space has too much freedom, or the latent variable is too powerful such that the model doesn't utilize the signals from the training labels.

In Figure \ref{fig:risk} we fix the number of features $p=128$ and vary the number of samples. We observe that ordinary least squares exhibit the peaking phenomenon and the \emph{more data can hurt} behavior at around $p\approx n$, while the rest of the regularized estimators perform similarly, with monotonically decreasing mean square prediction error as the number of samples increases.

In Figure \ref{fig:attack} (right) we consider the setting from Figure \ref{fig:risk} (left), and we evaluate the data-poisoning attack described in Section \ref{sec:attack} with $\epsilon=1$. We observe the ordinary least squares estimator is very susceptible to the attack, whereas the regularized methods are robust.

\subsection{Comparison of projection-based methods and the choice of \texorpdfstring{$k$}{k}} \label{subsec:exp2}

We compare the performance of five different projection-based methods as a function of the projection dimension $k$: PCA-OLS; oracle-PCR from \cite{xu2019number}; partial least squares from \cite{cook2019partial}; random Gaussian projection from \cite{Ba2020Generalization}; and orthogonal projection from \cite{lin2020causes} (including both the unregularized case and optimally-tuned ridge regression case).

To understand the interaction between the method and the data generating process, we consider four different population covariance structures, as illustrated in Figure \ref{fig:diff_cov}. For each covariance model, we fix the number of samples to $n=50$ and the number of parameters to $p=75$, and vary the projection dimension $k$. We generate $\beta \sim \mathcal{N}(0,I_p)$, corresponding to the condition in Theorem \ref{thm.risk3}. The performance is evaluated by out of sample mean square error (equation \eqref{mse}) averaged over $T=10$ trials with $n_{\text{test}} = 256$.
For all random projection methods, we also averaged over five experiments with random weights. 

\begin{figure}[t!]
    \centering
     \includegraphics[width=2\panelwidth]{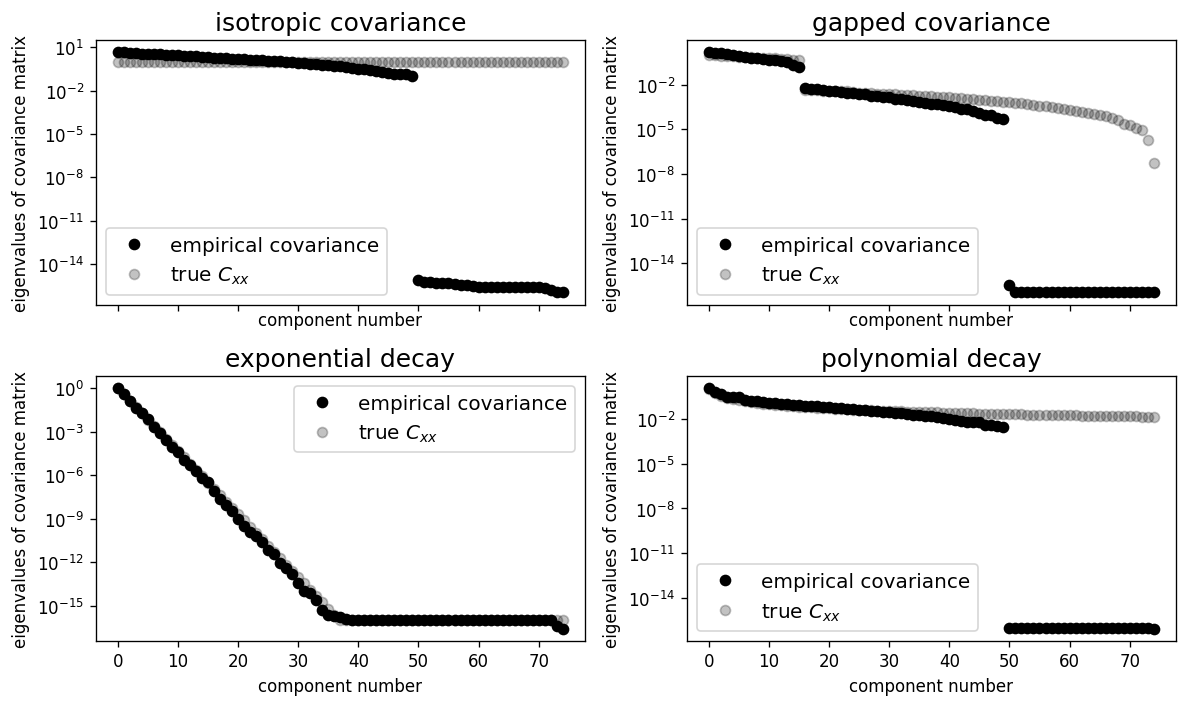}
    \caption{Different covariance structures in our experiments: isotropic covariance refers to $C_{xx}$ being the identity ; gapped covariance refers to a planted eigenvalue gap (at component 16); exponential(polynomial) decay refers to the different decay patterns of the eigenvalues. All the largest eigenvalue is chosen to be $1$.}
    \label{fig:diff_cov}
    \centering
    \includegraphics[width=2\panelwidth]{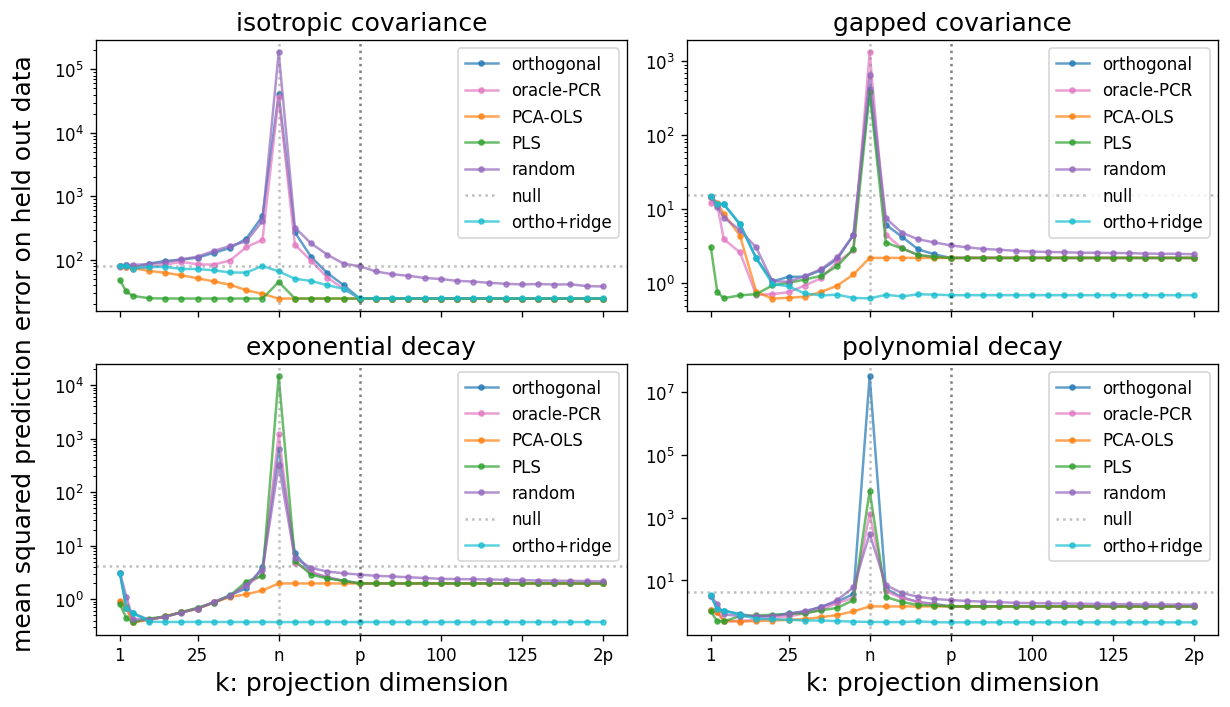}
    \caption{High signal-to-noise (SNR=16) setting with different covariance structure.
    Random orthogonal projection method is labeled as ``orthogonal" for the case without regularization, and ``ortho+ridge" for the case with optimally-tuned ridge regularization (by leave-one-out-cross-validation).
}
    \label{fig:highsnr}
\end{figure}

Figure \ref{fig:highsnr} illustrates the advantage of \textit{data-dependent} dimensionality reduction,
as compared to those overparameterized projection methods that are independent of training data and without regularization, especially when the population covariance exhibits eigenvalue decay. 
Remarkably, oracle-PCR performs worse than PCA-OLS at almost all choices of $k$, as analyzed in Section \ref{subsec:oracle}.
 
Similarly, random Gaussian projection and random orthogonal projection are inferior to PCA-OLS due to the lack of proper regularization. Nevertheless, random orthogonal projection with ridge regularization achieves similar minimum risk as PCA-OLS, where further overparameterization improves its performance for most cases. 



In Supplementary \ref{supp:experiments} we provide further numerical experiments where we investigate different signal-to-noise ratio ($\text{SNR} = \frac{\| \beta \| }{\sigma}$) and the setting where the coefficients of $\beta$ are misaligned with the eigenvalues of $C_{xx}$.
%



\section{Discussion}

Regularization plays an important role in inference. Large-capacity models that can perfectly fit the training data will also generalize well if appropriate regularization is in play~\cite{rougier2020exact}. 
In this article we studied different regression models in the context of the double-descent phenomenon~\cite{belkin2019reconciling}, with different forms of regularization. 
We show that dimensionality reduction is indeed a form of regularization, one that under certain assumptions avoids the ``peaking phenomenon"  (avoids, in the sense that the risk is bounded when the number of features equals the number of samples). Our difference with previous work in \cite{xu2019number} is that our dimensionality reduction is based on the empirical singular values of the features (it is derived from a standard principal components analysis); it is not based on the eigenvalues of the unobservable (true) population covariance.

More precisely, we provide non-asymptotic bounds for the risk of PCA-OLS, which is the ordinary least squares estimator following a projection of the features to their principal components (also known in the literature as PCR or principal component regression). Our main results hold in the overparameterized regime, where the number of samples $n$ is smaller than the number of parameters $p$, and the effective rank of the data is $o(n)$. A future research direction is to analyze PCA-OLS in the asymptotic regime. A particularly interesting question is under what data conditions can we prove that the risk of PCA-OLS decreases with $p$ in the asymptotic regime $\frac{n}{p}=\gamma<1$.
 
Besides analyzing the generalization performance of a particular method, we compare double-descent curves of various projection-based methods. Based on our empirical results, we conjecture that \textit{data-dependent} dimensionality reduction methods are superior to those independent of training data that lack proper regularization. 
Alternatively, we can view different projection methods as feature selection procedure. For example: PCA-OLS chooses features based on the maximum variance direction in the data; oracle-PCR chooses features with the prior from the true data model; random projection selects feature randomly. Our findings are connected with the discussion in  \cite{Belkin_2020} that the double-descent curve depends on the feature selection procedure. Moreover, we show that it is also driven by the data models and regularization.

We also show that unregularized least squares is extremely vulnerable to data-poisoning attacks in the overparameterized regime, whereas other regularized methods are not vulnerable. Our approach has the limitation that the attacks considered were tailored to ordinary least squares and not the regularized methods. We conjecture, however, that it is much harder to achieve arbitrarily large risk when attacking regularized methods.

Many different forms of regularization have similar flavors (for instance $l_2, l_1$ regularization). We propose a generative model that is closely related to PCA-OLS and PLS as shown in Section \ref{sec:setup}. Intuitively, in the overparameterized regime, the generative model learns a low-dimensional latent space that fits the data and labels as closely as possible, while PCA-OLS projects the data into a low-dimensional space that is close to the original data space in terms of the projection error. 

Finally, we emphasize that the motivation behind the generative model comes from the physical sciences, where many models used in practice have the generative structure, for instance \cite{ness2015cannon}. In this work, we have proved that its special case, PCA-OLS, does not exhibit the ``peaking phenomenon". Our empirical results suggest that this may hold generally for generative models. Analysis of these kinds of models is a good direction for future research. 



\section{Acknowledgments}
We thank Joshua Agterberg, Edgar Dobriban, Liliana Forzani, Daniel Hsu, Carey Priebe, Liza Rebrova, Bernhard Sch\"olkopf and Rachel Ward for relevant discussions. We also thank the anonymous reviewers for giving us constructive feedback that helped us improve this manuscript significantly.
SV is partially supported by NSF DMS 2044349, EOARD FA9550-18-1-7007, and the NSF-Simons Research Collaboration on the Mathematical and Scientific Foundations of Deep Learning (MoDL) (NSF DMS 2031985).

\appendix

\section{Proof of Theorem \ref{thm.risk}}
\label{app.proof1}

A key ingredient for the proof of Theorem \ref{thm.risk} is the concentration of the eigenvalues of the empirical covariance matrix around the respective eigenvalues of the population covariance matrix. In particular we use the following uniform concentration result by Koltchinskii and Lounici \cite{koltchinskii2017concentration}.

\begin{theorem9}\label{thm:theorem9}
Let $x_1, \cdots, x_n$ be the i.i.d. samples from a Gaussian distribution with mean $0$ and covariance $C_{xx}$. Let $X$ be the sample matrix where the $i$-th row is given by $x_i$. There exists a constant $c$ such that for any constant $1 < t < n$, with probability at least $1 - e^{-t}$:
\begin{equation}
   \|C_{xx}  - \frac{1}{n} \tr{X} X\|_{\text{op}} \le c \lambda_1 \max \Big\{\sqrt{\frac{r_0(C_{xx})}{n}},\frac{r_0(C_{xx})}{n}, \sqrt{\frac{t}{n}} \Big\}, \label{eqn:op}
\end{equation}
where $\lambda_1$ is the largest eigenvalue of $C_{xx}$, and $r_0(C_{xx}) := \frac{\operatorname{tr}(C_{xx})}{\lambda_1}$ is the effective rank.
\end{theorem9}

We first prove Lemmas \ref{lemma.bias} and \ref{lem:variance}, and Theorem \ref{thm.risk} will follow. 

\begin{lemma}\label{lemma.bias}
Let $E := C_{xx}  - \frac{1}{n} \tr{X} X$. Then
\begin{equation}
    \mathbb B \le \| \beta \|^2 \Big(2  \|E \|_{\text{op}} + \lambda_{k+1}  \Big).
\end{equation}
\end{lemma}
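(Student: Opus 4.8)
The plan is to start directly from the bias expression in \eqref{eqn:pca_var}, namely $\mathbb B = \beta^\top\,\Pi_{X_{\text{PCA}\perp}}\,C_{xx}\,\Pi_{X_{\text{PCA}\perp}}\,\beta$, and to exploit that $\Pi_{X_{\text{PCA}\perp}}$ is precisely the orthogonal projection onto the bottom $p-k$ eigenspace of the empirical covariance $\tfrac1n\tr{X}X$. First I would substitute $C_{xx} = \tfrac1n\tr{X}X + E$ and split the quadratic form additively:
\begin{equation}
\mathbb B = \beta^\top\,\Pi_{X_{\text{PCA}\perp}}\,\Big(\tfrac1n\tr{X}X\Big)\,\Pi_{X_{\text{PCA}\perp}}\,\beta + \beta^\top\,\Pi_{X_{\text{PCA}\perp}}\,E\,\Pi_{X_{\text{PCA}\perp}}\,\beta.
\end{equation}
The second (perturbation) term is immediately controlled by the operator norm and the fact that an orthogonal projection is a contraction: since $\beta^\top \Pi_{X_{\text{PCA}\perp}} E \Pi_{X_{\text{PCA}\perp}} \beta = (\Pi_{X_{\text{PCA}\perp}}\beta)^\top E (\Pi_{X_{\text{PCA}\perp}}\beta)$, it is bounded above by $\|E\|_{\text{op}}\,\|\Pi_{X_{\text{PCA}\perp}}\beta\|^2 \le \|E\|_{\text{op}}\,\|\beta\|^2$.

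For the first term, the key observation is that, because $X_{\text{PCA},k}$ is the rank-$k$ PCA approximation of $X$, the range of $\Pi_{X_{\text{PCA}}}$ is the span of the top-$k$ right singular vectors of $X$, i.e.\ the top-$k$ eigenvectors of $\tfrac1n\tr{X}X$. Hence $\Pi_{X_{\text{PCA}\perp}}$ projects onto the span of the remaining eigenvectors, which is an invariant subspace on which $\tfrac1n\tr{X}X$ acts with largest eigenvalue $\tilde\lambda_{k+1}$. Writing $\beta^\top \Pi_{X_{\text{PCA}\perp}} (\tfrac1n\tr{X}X) \Pi_{X_{\text{PCA}\perp}} \beta = (\Pi_{X_{\text{PCA}\perp}}\beta)^\top (\tfrac1n\tr{X}X)(\Pi_{X_{\text{PCA}\perp}}\beta)$ and using that the argument lies in this bottom eigenspace gives the bound $\tilde\lambda_{k+1}\,\|\Pi_{X_{\text{PCA}\perp}}\beta\|^2 \le \tilde\lambda_{k+1}\,\|\beta\|^2$.

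The final step is to convert the empirical eigenvalue $\tilde\lambda_{k+1}$ into the population eigenvalue $\lambda_{k+1}$ via Weyl's inequality: since $E = C_{xx} - \tfrac1n\tr{X}X$, we have $|\tilde\lambda_{k+1} - \lambda_{k+1}| \le \|E\|_{\text{op}}$, so $\tilde\lambda_{k+1} \le \lambda_{k+1} + \|E\|_{\text{op}}$. Combining the three estimates yields $\mathbb B \le (\tilde\lambda_{k+1} + \|E\|_{\text{op}})\|\beta\|^2 \le (\lambda_{k+1} + 2\|E\|_{\text{op}})\|\beta\|^2$, which is exactly the claimed bound. I do not expect a serious analytical obstacle here; the whole argument is elementary operator-norm bookkeeping. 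The one point requiring care is the identification of the range of $\Pi_{X_{\text{PCA}\perp}}$ with the bottom eigenspace of $\tfrac1n\tr{X}X$, so that this subspace is genuinely invariant and $\tfrac1n\tr{X}X$ is bounded there by $\tilde\lambda_{k+1}$; once that block structure relative to the splitting induced by $\Pi_{X_{\text{PCA}}}$ and $\Pi_{X_{\text{PCA}\perp}}$ is in hand, the contraction property of projections and Weyl's inequality finish the proof immediately.
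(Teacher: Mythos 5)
Your proof is correct and follows essentially the same route as the paper: both arguments decompose $C_{xx}$ relative to the empirical covariance (you split the quadratic form additively via $C_{xx}=\tfrac1n\tr{X}X+E$, the paper applies the triangle inequality at the operator-norm level), both bound the empirical contribution by $\tilde\lambda_{k+1}$ using that $\Pi_{X_{\text{PCA}\perp}}$ annihilates the top-$k$ empirical eigenspace, and both finish with the same Weyl-type step $\tilde\lambda_{k+1}\le\lambda_{k+1}+\|E\|_{\text{op}}$. The intermediate bound $(\tilde\lambda_{k+1}+\|E\|_{\text{op}})\|\beta\|^2$ is identical in the two proofs, so the difference is only bookkeeping.
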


\begin{proof}
Let the spectral decomposition of the empirical covariance matrix be:
\begin{equation}
 \frac{1}{n}\tr{X} X =  \sum_{i=1}^{\min{\{n,p\}}} \tilde{\lambda}_i \tilde{u}_i \tr{\tilde{u}_i}, \quad\quad
 \tr{X_{PCA, k}} X_{PCA, k}  = n \sum_{i=1}^k \tilde{\lambda}_i \tilde{u}_i \tr{\tilde{u}_i}.   \label{eqn:tilde}
\end{equation}
Thus:
\begin{align}
\Pi_{X_{\text{PCA}\perp}} &= I - ( \tr{X_{PCA, k}} X_{PCA, k} )^{\dagger}\tr{X_{PCA, k}} X_{PCA, k} 
= I - \sum_{i={1}}^k \tilde{u}_i \tr{\tilde{u}_i},
\end{align}
where $I$ is the $p \times p$ identity matrix.
The bias term can be bounded by:
\begin{align}
 \mathbb B &=  \operatorname{tr}(\beta^\top\,\Pi_{X_{\text{PCA}\perp}}\,C_{xx}\,\Pi_{X_{\text{PCA}\perp}}\,\beta) \\
  &\le \|\Pi_{X_{\text{PCA}\perp}}\,C_{xx}\,\Pi_{X_{\text{PCA}\perp}} \|_{\text{op}} \, \|\beta \|^2 \label{eqn:simple}  
 \equiv \|\mathbb C \|_{\text{op}} \, \|\beta \|^2,
\end{align}
where \eqref{eqn:simple} follows from Von Neumann's trace inequality. $\| \cdot \|_{\text{op}}$ is the operator-norm for matrices. 
Note that:
\begin{align}
    \mathbb C &= (I - \sum_{i=1}^k \tilde{u}_i \tr{\tilde{u}_i}) \, C_{xx} \, (I - \sum_{i=1}^k \tilde{u}_i \tr{\tilde{u}_i}) \\
    &= (I - \sum_{i=1}^k \tilde{u}_i \tr{\tilde{u}_i})\, (C_{xx} - \sum_{i=1}^k \tilde{\lambda}_i \tilde{u}_i \tr{\tilde{u}_i}) \, (I - \sum_{i=1}^k \tilde{u}_i \tr{\tilde{u}_i}). \label{eqn: cxxperp}
\end{align}
Equation \eqref{eqn: cxxperp} holds because
$(I - \sum_{i=1}^k \tilde{u}_i \tr{\tilde{u}_i})$ and  $\sum_{i=1}^k \tilde{\lambda}_i \tilde{u}_i \tr{\tilde{u}_i}$ are orthogonal.
If $k = p < n$, then $\|I - \sum_{i=1}^k \tilde{u}_i \tr{\tilde{u}_i} \|_{\text{op}} = 0$. So, the bias is trivially 0.
Otherwise $\|I - \sum_{i=1}^k \tilde{u}_i \tr{\tilde{u}_i} \|_{\text{op}} = 1$:
\begin{align}
  \|\mathbb C \|_{\text{op}}  &\le \|C_{xx} - \sum_{i=1}^k \tilde{\lambda}_i \tilde{u}_i \tr{\tilde{u}_i} \|_{\text{op}} 
  = \|C_{xx} -  \sum_{i=1}^n \tilde{\lambda}_i \tilde{u}_i \tr{\tilde{u}_i} + \sum_{i=k+1}^n \tilde{\lambda}_i \tilde{u}_i \tr{\tilde{u}_i} \|_{\text{op}}\\  \label{eqn:triangle}
  &\le \|C_{xx}  - \frac{1}{n} \tr{X} X \|_{\text{op}} + \|   \sum_{i=k+1}^n \tilde{\lambda}_i \tilde{u}_i \tr{\tilde{u}_i} \|_{\text{op}}  \\
  & = \|C_{xx}  - \frac{1}{n} \tr{X} X \|_{\text{op}} + \tilde{\lambda}_{k+1}  \\
  & \le 2  \|C_{xx}  - \frac{1}{n} \tr{X} X \|_{\text{op}} + \lambda_{k+1} \label{eqn:bias},
\end{align}
where the last inequality follows from observing that, for any $1 \le k < n$:
\begin{equation}
  \|C_{xx}  - \frac{1}{n} \tr{X} X \|_{\text{op}} = \max_{i=\{1, \cdots, n\} } \left|\lambda_i - \tilde{\lambda}_i \right| \ge  \left| \lambda_{k+1} - \tilde{\lambda}_{k+1} \right| .  \label{eqn:eigenvalue_unif}
\end{equation}

\end{proof}
Lemma \ref{lemma.bias} shows essentially the same bound that Theorem 4 of \cite{bartlett2020benign}, with an extra term coming from the $k+1$-th eigenvalue of $C_{xx}$ (that in practice should be small for the data models in which PCA is suitable). Note that \eqref{eqn:triangle} can be quite loose. It could be refined by applying the Davis-Kahan
theorem under additional assumptions on the eigenvalue separation. 

\begin{lemma} \label{lem:variance}
Let $E := C_{xx}  - \frac{1}{n} \tr{X} X$. Assume $\| E \|_{\text{op}} < \lambda_k$, then
\begin{equation}
 \frac{\sigma^2}{n} \frac{k \lambda_p}{ \lambda_1 + \|E\|_{\text{op}}} \leq \mathbb V \leq \frac{\sigma^2}{n} \frac{k \lambda_1}{ \lambda_k - \|E\|_{\text{op}}}.  \label{eqn:var_loose}
\end{equation}
\end{lemma}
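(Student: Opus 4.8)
The plan is to compute $\mathbb{V}$ exactly in the empirical eigenbasis and then sandwich each summand using two separate spectral facts: a Rayleigh-quotient bound for $C_{xx}$ and a Weyl-type perturbation bound for the empirical eigenvalues. Starting from \eqref{eqn:pca_var} and the spectral decomposition \eqref{eqn:tilde}, I would first write
\begin{equation}
\left(\tfrac{1}{n}X_{\text{PCA}}^\top X_{\text{PCA}}\right)^\dagger = \sum_{i=1}^k \frac{1}{\tilde{\lambda}_i}\,\tilde{u}_i\tilde{u}_i^\top,
\end{equation}
so that, using linearity and cyclicity of the trace,
\begin{equation}
\mathbb{V} = \frac{\sigma^2}{n}\sum_{i=1}^k \frac{\tilde{u}_i^\top C_{xx}\,\tilde{u}_i}{\tilde{\lambda}_i}.
\end{equation}
This reduces the problem to bounding the $k$ scalars $\tilde{u}_i^\top C_{xx}\tilde{u}_i / \tilde{\lambda}_i$ from above and below.

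Second, since each $\tilde{u}_i$ is a unit vector, the numerator is a Rayleigh quotient of $C_{xx}$ and therefore satisfies $\lambda_p \le \tilde{u}_i^\top C_{xx}\tilde{u}_i \le \lambda_1$. For the denominator I would invoke Weyl's inequality, which gives $|\tilde{\lambda}_i - \lambda_i| \le \|E\|_{\text{op}}$ for every $i$ (here $E = C_{xx} - \frac{1}{n}X^\top X$, and for $i \le k$ the empirical eigenvalues of the rank-$k$ truncation coincide with those of the full matrix). For the retained indices $i \le k$ we have $\lambda_k \le \lambda_i \le \lambda_1$, hence
\begin{equation}
0 < \lambda_k - \|E\|_{\text{op}} \le \tilde{\lambda}_i \le \lambda_1 + \|E\|_{\text{op}},
\end{equation}
where positivity is exactly where the hypothesis $\|E\|_{\text{op}} < \lambda_k$ enters: it guarantees the retained empirical eigenvalues are bounded away from zero, so the pseudo-inverse is well behaved and each summand is finite.

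Finally, combining these two facts, each summand obeys
\begin{equation}
\frac{\lambda_p}{\lambda_1 + \|E\|_{\text{op}}} \le \frac{\tilde{u}_i^\top C_{xx}\tilde{u}_i}{\tilde{\lambda}_i} \le \frac{\lambda_1}{\lambda_k - \|E\|_{\text{op}}},
\end{equation}
and summing over the $k$ retained components and multiplying by $\sigma^2/n$ yields the two-sided bound \eqref{eqn:var_loose}. I expect no serious obstacle here; the only delicate points are orienting the inequalities so that $\lambda_1$ appears in the numerator of the upper bound while $\lambda_k$ controls the denominator, and confirming that $\|E\|_{\text{op}} < \lambda_k$ is precisely what keeps the denominators positive. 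I would also remark that, unlike a Von Neumann trace-inequality argument, this Rayleigh-quotient route stays tight when $C_{xx}$ is isotropic—since $\lambda_1 = \lambda_p$ makes the numerator exact—matching the observation following the theorem that the upper and lower bounds coincide in the identity case.
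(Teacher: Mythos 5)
Your proof is correct, but it takes a different route from the paper's. The paper bounds $\operatorname{tr}\bigl((\tfrac{1}{n}X_{\text{PCA}}^\top X_{\text{PCA}})^\dagger C_{xx}\bigr)$ from both sides with Von Neumann's trace inequality, pairing ordered eigenvalues of the two matrices, and only then applies the Weyl-type perturbation bound to $\tilde{\lambda}_k$ (upper bound) and $\tilde{\lambda}_1$ (lower bound). You instead expand the trace \emph{exactly} in the empirical eigenbasis as $\sum_{i=1}^k \tilde{u}_i^\top C_{xx}\tilde{u}_i/\tilde{\lambda}_i$, bound each numerator by the Rayleigh-quotient extremes $\lambda_p \le \tilde{u}_i^\top C_{xx}\tilde{u}_i \le \lambda_1$, and each denominator by the same Weyl step $|\tilde{\lambda}_i - \lambda_i| \le \|E\|_{\text{op}}$; the hypothesis $\|E\|_{\text{op}} < \lambda_k$ enters identically in both arguments, to keep the retained denominators positive. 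The final bounds coincide, so neither route is sharper here, but yours is more elementary (no appeal to Von Neumann) and has the advantage that your exact expansion is precisely the expression the paper itself starts from when proving the refined Theorem \ref{thm.risk2} (equation \eqref{eqn:var_tight}, further expanded in the population eigenbasis as $\sum_{i,j} (\lambda_j/\tilde{\lambda}_i)\langle\tilde{u}_i,u_j\rangle^2$), so your argument dovetails naturally with the alignment-based analysis, whereas Von Neumann discards eigenvector information from the outset. One side remark of yours is off: the Von Neumann route is \emph{not} looser in the isotropic case—with $C_{xx}=I$ the paper's upper and lower bounds also collapse to $\frac{\sigma^2}{n}\frac{k}{1\mp\|E\|_{\text{op}}}$, exactly as yours do—so the matching of bounds in the identity case is a feature of both arguments, not a distinguishing advantage of the Rayleigh-quotient approach.
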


\begin{proof}
Recall from \eqref{eqn:tilde}, $\tilde{\lambda}_i$ denotes the $i$-th eigenvalue of $(1/n)\,X_\text{PCA}^\top X_\text{PCA}$, for $i=1, \cdots, k$. $\lambda_i$ denotes the $i$-th eigenvalue of $C_{xx}$. Thus, $1/\tilde{\lambda}_{1+k-i}$ is the $i$-th largest eigenvalue of $(\frac{1}{n}\,X_\text{PCA}^\top X_\text{PCA})^\dagger$. By Von Neumann's trace inequality we have:
\begin{equation}
    \operatorname{tr}\left((\frac{1}{n}\,X_\text{PCA}^\top X_\text{PCA})^\dagger\,C_{xx}\right)\leq  \sum_{i=1}^k \frac{\lambda_i}{\tilde{\lambda}_{1+k-i}} \le \frac{1}{\tilde{\lambda}_{k}} \sum_{i=1}^k \lambda_i . 
\end{equation}
By \eqref{eqn:eigenvalue_unif}, $\tilde{\lambda}_k \ge \lambda_k - \| E \|_{\text{op}} > 0$, where the second inequality holds from assumption. So we obtain:
\begin{equation}
    \mathbb V = \frac{\sigma^2}{n} \operatorname{tr}\left((\frac{1}{n}\,X_\text{PCA}^\top X_\text{PCA})^\dagger\,C_{xx}\right) \le \frac{\sigma^2}{n}\frac{k \lambda_1}{\lambda_k - \| E \|_{\text{op}}}.
\end{equation}
 
In order to get a lower bound we use Von Neumann's trace inequality:
\begin{equation}
   \operatorname{tr}\left((\frac{1}{n}\,X_\text{PCA}^\top X_\text{PCA})^\dagger\,C_{xx}\right) \geq  \sum_{i=1}^k \frac{\lambda_{p-i+1}}{\tilde{\lambda}_{1+k-i}} \ge \frac{1}{\tilde{\lambda}_{1}} \sum_{i=1}^k \lambda_{p-i+1}.  \label{eqn:var_lower}
\end{equation}
By \eqref{eqn:eigenvalue_unif}, $\tilde{\lambda}_1 \le \lambda_1 + \| E \|_{\text{op}}$, so we obtain:
\begin{equation}
   \mathbb V = \frac{\sigma^2}{n} \operatorname{tr}\left((\frac{1}{n}\,X_\text{PCA}^\top X_\text{PCA})^\dagger\,C_{xx}\right)   \geq \frac{\sigma^2}{n} \frac{k \lambda_p}{\lambda_1 + \|E\|_{\text{op}}} .
\end{equation}
Combining the upper with the lower bound gives us  the result.

\end{proof}

\begin{proof}[Proof of Theorem \ref{thm.risk}]
From lemma \ref{lemma.bias} and lemma \ref{lem:variance}, we can control the bias and variance simultaneously via $\| E \|_{\text{op}}$. We complete the proof by using the upper bound of $\| E \|_{\text{op}}$ in \eqref{eqn:op}. 

\end{proof}

\section{Proof of Theorem \ref{thm.risk2} \label{app.proof2}}

In order to prove Theorem \ref{thm.risk2} we use the following results from \cite{loukas2017close}:
\begin{cor41}
\label{cor41}
For any weights $w_{i j}$ and real $t>0$ :
\begin{equation}
\mathbf{P}\left(\sum_{i \neq j} w_{i j}\left\langle\widetilde{u}_{i}, u_{j}\right\rangle^{2}>t\right) \leq \sum_{i \neq j} \frac{4 w_{i j} k_{j}^{2}}{n t\left(\lambda_{i}-\lambda_{j}\right)^{2}},   \label{eqn:cor41} 
\end{equation}
where $k_{j}^{2}=\lambda_{j}\left(\lambda_{j}+\operatorname{tr}(C_{xx})\right)$ for data generated from Gaussian distribution, and $w_{i j} \neq 0$ when $\lambda_{i} \neq \lambda_{j}$.
\end{cor41}

Let $P_r = u_r \tr{u}_r, \hat{P}_r = \tilde{u}_r \tr{\tilde{u}_r}$ be the empirical and population eigen-projectors, respectively. We need the following set of concentration results from \cite{koltchinskii2017normal}:

\begin{eqn13}
\begin{equation}
\mathbb{E}\|\hat{P}_{r}-P_{r}\|_{2}^{2}=(1+o(1)) \frac{A_{r}(C_{xx})}{n},    \label{eqn:clt_mean}
\end{equation}
where $A_{r}(C_{xx})=2 \operatorname{tr}\left(P_{r} C_{xx} P_{r}\right) \operatorname{tr}\left(C_{r} C_{xx} C_{r}\right)$ and the operator $C_{r}$ is defined as $C_{r}:=$
$\sum_{s \neq r} \frac{P_{s}}{\mu_{r}-\mu_{s}}$. 
\end{eqn13}

\begin{eqn14}
\begin{equation}
\operatorname{Var}\left(\|\hat{P}_{r}-P_{r}\|_{2}^{2}\right)=(1+o(1)) \frac{B_{r}^{2}(C_{xx})}{n^{2}},   \label{eqn:clt_var}
\end{equation}
where $B_{r}(C_{xx}):=2 \sqrt{2}\left\|P_{r} C_{xx} P_{r}\right\|_{2}\left\|C_{r} C_{xx} C_{r}\right\|_{2}$.
\end{eqn14}

Note that under mild assumption, $A_{r}(C_{xx})$ and $B_r (C_{xx})$ have the same order as the effective rank $r_0 (C_{xx})$. Thus, if $r_0 (C_{xx}) = o(n)$, then the empirical eigen-projectors concentrate on their population counterparts. This is the crucial assumption to the following asymptotic normality result:

\begin{eqn15}
Assume effective rank $r_0(C_{xx})=o(n)$:
\begin{equation}
  \frac{\|\hat{P}_{r}-P_{r}\|_{2}^{2}-\mathbb{E}\|\hat{P}_{r}-P_{r}\|_{2}^{2}}{\operatorname{Var}^{1 / 2}\left(\|\hat{P}_{r}-P_{r}\|_{2}^{2}\right)} \sim  \mathcal{N}(0,1) .
\end{equation}
\end{eqn15}

When stating our concentration results, we often build on the result for the $i$-th eigenspace, and then use an intersection bound to conclude the probability for the leading $k$ eigenspaces:

Let $E_i$ be the $i$-th event. By union bound,
\begin{equation}
\mathbf{P} \left(\bigcup_{i=1}^{k} E_{i}^{c}\right) \leq \sum_{i=1}^{k} \mathbf{P} \left(E_{i}^{c}\right).
\end{equation}
Using De Morgan's Law,
\begin{equation}
  \mathbf{P} \left(\left(\bigcap_{i=1}^{k} E_{i}\right)^{c}\right) \leq \sum_{i=1}^{k}\left[1-\mathbf{P} \left(E_{i}\right)\right] \implies  \mathbf{P}\left( \bigcap_{i=1}^k E_i \right) \ge \sum_{i=1}^k \mathbf{P} (E_i) + 1 - k . \label{eqn:intersect}
\end{equation}

\begin{proof}[Proof of Theorem \ref{thm.risk2}] For the upper bound we write
\begin{align}
    \operatorname{tr}\left((\frac{1}{n}\,X_\text{PCA}^\top X_\text{PCA})^\dagger\,C_{xx}\right) &= 
    \operatorname{tr} \left( (\sum_{i=1}^k \frac{1}{\tilde{\lambda}_i} \tilde{u}_i \tr{\tilde{u}_i}) \, (\sum_{j=1}^p \lambda_j u_j \tr{u}_j)\right) 
    =   \sum_{i=1} ^k \sum_{j=1}^p \frac{\lambda_j}{\tilde{\lambda}_i}  \langle \tilde{u}_i, u_j \rangle^2   \\
    &= \sum_{i=1} ^k \left( \frac{\lambda_i}{\tilde{\lambda}_i}  \langle \tilde{u}_i, u_i \rangle^2 + \sum_{j \neq i, j=1}^p \frac{\lambda_j}{\tilde{\lambda}_i}  \langle \tilde{u}_i, u_j \rangle^2  \right) \label{eqn:var_tight} \\
     & \leq \sum_{i=1} ^k \left( \frac{\lambda_i}{\lambda_i + \|E\|_{\text{op}}}  \langle \tilde{u}_i, u_i \rangle^2 + \sum_{j \neq i, j=1}^p \frac{\lambda_j}{\tilde{\lambda}_i} \langle \tilde{u}_i, u_j \rangle^2  \right),
\end{align}
where the last inequality follows from \eqref{eqn:eigenvalue_unif}. 
Now, using Corollary 4.1 from \cite{loukas2017close} (equation \eqref{eqn:cor41}), for each $i$, the following event
\begin{equation}
\sum_{j \neq i} \frac{\lambda_j}{\tilde{\lambda}_i} \langle \tilde{u}_i, u_j \rangle^2 \le t     
\end{equation}
holds with probability at least $1- \sum_{j \neq i}\frac{4 w_{i j} k_{j}^{2}}{n t\left(\lambda_{i}-\lambda_{j}\right)^{2}}$, where $w_{i j} = \frac{\lambda_j}{\tilde{\lambda}_i}, k_{j}^{2}=\lambda_{j}\left(\lambda_{j}+\operatorname{tr}(C_{xx})\right)$. Then the probability of all $i = 1, \cdots, k$ terms being upper bounded by $t$ is at least $1 - \sum_{i=1}^k \sum_{j \neq i} \frac{4 w_{i j} k_{j}^{2}}{n t\left(\lambda_{i}-\lambda_{j}\right)^{2}}$. Together with the fact that $\langle \tilde{u}_i, u_j \rangle^2 \le 1$, we have
\begin{equation}
     \operatorname{tr}\left((\frac{1}{n}\,X_\text{PCA}^\top X_\text{PCA})^\dagger\,C_{xx}\right)
    \le \sum_{i=1} ^k \left( \frac{\lambda_i}{\lambda_i + \|E\|_{\text{op}}}  + t \right),
\end{equation}
with probability at least $1 - \sum_{i=1}^k \sum_{j \neq i} \frac{4 w_{i j} k_{j}^{2}}{n t\left(\lambda_{i}-\lambda_{j}\right)^{2}}$. Note that this probability is valid when $r_o(C_{xx}) := \frac{\operatorname{tr}(C_{xx})}{\lambda_1} = o(n)$ when $k$ is fixed while $n, p \to \infty$.

For the lower bound we start from equation \eqref{eqn:var_tight} and drop the second term where $j \neq i$:
\begin{align}
   \operatorname{tr}\left((\frac{1}{n}\,X_\text{PCA}^\top X_\text{PCA})^\dagger\,C_{xx}\right) 
   &\ge \sum_{i=1} ^k  \frac{\lambda_i}{\tilde{\lambda}_i}  \langle \tilde{u}_i, u_i \rangle^2 
   \ge \sum_{i=1} ^k  \frac{\lambda_i}{\lambda_i - \|E\|_{\text{op}}}  \langle \tilde{u}_i, u_i \rangle^2. \label{eqn:var_low}
\end{align}
Let $\Phi(a)$ denote the standard normal distribution. Assume the effective rank $r_0 (C_{xx}) = o(n)$ and apply the asymptotic normality result by \cite{koltchinskii2017concentration}:
\begin{align}
\mathbf{P} \left( \frac{\|\hat{P}_{r}-P_{r}\|_{2}^{2}-\mathbb{E}\|\hat{P}_{r}-P_{r}\|_{2}^{2}}{\operatorname{Var}^{1 / 2}\left(\|\hat{P}_{r}-P_{r}\|_{2}^{2}\right)} \le a \right) = \Phi(a).
\end{align}
Now, observe that
\begin{equation}
 \|\hat{P}_r - P_r \|^2_2 = \| \hat{P}_r \|^2_2  +  \|P\|^2_2  - 2 \langle \hat{P}_r, P_r \rangle  = 2 - 2\langle \tilde{u}_i, u_i \rangle^2.
\end{equation}
Thus,  with probability $\Phi(a)$ :
\begin{equation}
    \langle \tilde{u}_i, u_i \rangle^2 \ge 1 - \frac{1}{2} \left(  \mathbb{E}\|\hat{P}_{r}-P_{r}\|_{2}^{2} + a \operatorname{Var}^{1 / 2}\left(\|\hat{P}_{r}-P_{r}\|_{2}^{2}\right)  \right) = 1 - o(1/n),
\end{equation}
where the expectation and the variance are given in equation \eqref{eqn:clt_mean} and \eqref{eqn:clt_var}. Recall that both of them have the same order as $r_0(C_{xx})/ n$. By assumption, the effective rank $r_0(C_{xx})$ is $o(n)$. Thus, both the expectation and the variance grow as $o(n^{-1})$. Note that throughout our proof of Theorem \ref{thm.risk2}, $k$ is fixed, and thus $a$ is some constant.

Plugging back in equation \eqref{eqn:var_low} and combining with an intersection bound from equation \eqref{eqn:intersect}, with probability at least $k \Phi(a) +1 - k$:
\begin{equation}
    \operatorname{tr}\left((\frac{1}{n}\,X_\text{PCA}^\top X_\text{PCA})^\dagger\,C_{xx}\right) 
   \ge \sum_{i=1} ^k  \frac{\lambda_i}{\lambda_i - \|E\|_{\text{op}}} (1 - o(1/n)).
\end{equation}

We remark that as $n \to \infty$, there exists a constant $a$ large enough for the probability to be positive. 
\end{proof}

To summarize, with high probability
\begin{equation}
  \frac{\sigma^2}{n} \sum_{i=1} ^k  \left( \frac{\lambda_i}{\lambda_i - \|E\|_{\text{op}}}  - o(1/n) \right) \leq \mathbb V \leq \frac{\sigma^2}{n} \sum_{i=1} ^k \left( \frac{\lambda_i}{\lambda_i + \|E\|_{\text{op}}}  + t \right) \label{eqn:var_tight_final}.
\end{equation}

\section{Proof of Theorem \ref{thm.risk3}} \label{app.proof3}

\begin{proof}

Assume $\beta$ is randomly drawn from an isotropic distribution: $\mathbb E_{\beta} [\beta] = 0, \, \mathbb E_{\beta} [\beta \, \tr{\beta}] = I$. Then we provide a lower bound by taking expectation over $\beta$:
\begin{align}
  \mathbb E_{\beta} [\mathbb B]  &= \mathbb E_{\beta} \Big[ \operatorname{tr} \left(\beta^\top\,\Pi_{X_{\text{PCA}\perp}}\,C_{xx}\,\Pi_{X_{\text{PCA}\perp}}\,\beta \right) \Big] 
  = \operatorname{tr} \left( \Pi_{X_{\text{PCA}\perp}}\,C_{xx}\,\Pi_{X_{\text{PCA}\perp}}\,\mathbb E_{\beta}[\beta \, \beta^\top] \right) \\
  &= \operatorname{tr} \left( C_{xx}\, \left(I - \Pi_{X_{\text{PCA}}} \right) \right) 
  = \sum_{j=1}^p \lambda_j - \sum_{i=1}^k \sum_{j=1}^p  \lambda_j \langle u_j, \tilde{u}_i \rangle^2  \\
  &= \sum_{j=1}^p \lambda_j -  \sum_{i=1}^k  \lambda_i \langle u_i, \tilde{u}_i \rangle^2  -  \sum_{i=1}^k \sum_{j \neq i }^p \lambda_j \langle u_j, \tilde{u}_i \rangle^2 , \label{eqn:lower_b}
\end{align}
where the last equation follows by splitting the inner products between population and empirical eigenvectors into terms involving $j=i$ (i.e., large) and $j \neq i$ (i.e., small). The large terms can be bounded by $\langle u_i, \tilde{u}_i \rangle^2 \le 1$. The small terms can be controlled using Corollary 4.1 from \cite{loukas2017close} (equation \eqref{eqn:cor41}), which states the following event $E_i$
\begin{equation}
\sum_{j \neq i} \lambda_j \langle \tilde{u}_i, u_j \rangle^2 \le t     
\end{equation}
holds with probability at least $1- \sum_{j \neq i}\frac{4 \lambda_{j} k_{j}^{2}}{n t\left(\lambda_{i}-\lambda_{j}\right)^{2}}$, where $ k_{j}^{2}=\lambda_{j}\left(\lambda_{j}+\operatorname{tr}(C_{xx})\right)$. Then the probability of all $E_i, i = 1, \cdots, k$ being upper bounded by $t$ is at least $1 - \sum_{i=1}^k \sum_{j \neq i} \frac{4 \lambda_{j} k_{j}^{2}}{n t\left(\lambda_{i}-\lambda_{j}\right)^{2}}$.
Thus, with probability at least $1 - \sum_{i=1}^k \sum_{j \neq i} \frac{4 \lambda_{j} k_{j}^{2}}{n t\left(\lambda_{i}-\lambda_{j}\right)^{2}}$,
\begin{equation}
   \mathbb E_{\beta} [\mathbb B]  \geq \sum_{j=1}^p \lambda_j - \sum_{i=1}^k \lambda_i - kt = \sum_{i=k+1}^p \lambda_i - kt.
\end{equation}
\end{proof}

\bibliographystyle{siamplain}
\bibliography{references}

\clearpage
\setcounter{section}{0}
\section*{Supplementary material}
\beginsupplement
\renewcommand\appendixname{Supplementary}

\section{Bias-variance decomposition of the risk} \label{supple:bv-decomp}
In order to analyze the risk of the PCA-OLS estimator $\hat\beta = \hat\beta_{\PCA,k}$ we use the standard decomposition of bias plus variance. In this particular case it is
\begin{align}
    \tilde\beta &= \Pi_{X_{\text{PCA}}}\,\beta \, 
    \\
    \mathcal R(\hat\beta\given X) &= \underbrace{\mathbb E_{x_\ast} [(x_\ast^\top(\beta - \tilde\beta))^2 \given X] }_{\text{bias squared}} 
   + \underbrace{\mathbb E_{Y,x_\ast} [(x_\ast^\top(\hat\beta - \tilde \beta))^2 \given X]}_{\text{variance}} + \sigma^2
    \\
    &= \underbrace{\beta^\top\,\Pi_{X_{\text{PCA}\perp}}\,C_{xx}\,\Pi_{X_{\text{PCA}\perp}}\,\beta}_{\text{bias squared}} + \underbrace{\frac{\sigma^2}{n} \operatorname{tr}\left((\frac{1}{n}\,X_\text{PCA}^\top X_\text{PCA})^\dagger\,C_{xx}\right)}_{\text{variance}}
    + \sigma^2,
\end{align}
where $X_\text{PCA}$, $\Pi_{X_\text{PCA}}$, and $\Pi_{X_{\text{PCA}\perp}}$ are the equivalents of their non-PCA counterparts for the rank-$k$ PCA approximation to $X$.



\begin{proof}
Let $X_p := X_{\PCA,k}$. We have:
\begin{align}
    \hat{\beta} & =  X_p^{\dagger} \, Y , \\
    Y &= X \beta + \epsilon = (X_{p} + X_{p}^{\perp}) \beta + \epsilon ,  \\
     \mathcal R(\hat\beta\given X) &= \mathbb E_{Y, x_\ast} [(x_\ast^\top(\beta - \hat{\beta}))^2 \given X]  + \sigma^2 \\
     &= \mathbb E_{Y, x_\ast} [(x_\ast^\top(\beta - X_p^{\dagger} \, ( X \beta + \epsilon )))^2 \given X]  + \sigma^2 \\
     &= \mathbb E_{Y, x_\ast} [(x_\ast^\top(I - X_p^{\dagger} X) \, \beta - x_\ast^\top  X_p^{\dagger} \, \epsilon )^2 \given X] + \sigma^2  \\
     &=  \mathbb E_{Y, x_\ast} [(x_\ast^\top(I - X_p^{\dagger}  X_{p} ) \, \beta)^2 \given X]  +  \mathbb E_{Y, x_\ast} [ (x_\ast^\top  X_p^{\dagger} \, \epsilon ) )^2 \given X]  + \sigma^2 \label{eqn:vanish}\\
     &= \beta^\top\,\Pi_{X_{p\perp}} \,C_{xx}\,\Pi_{X_{p\perp}} \beta + \operatorname{tr}( {{X_p^{\dagger}}}^\top\,\Sigma\, X_p^{\dagger} \, \mathbb{E}\left[\epsilon \epsilon^{\top} | X\right])  + \sigma^2 \\
     &= \beta^\top\,\Pi_{X_{p\perp}} \,C_{xx}\,\Pi_{X_{p\perp}} \beta  + \sigma^2 \operatorname{tr}( X_p^{\dagger} \, {{X_p^{\dagger}}}^\top \,\Sigma ) + \sigma^2  \label{eqn:iidnoise} \\
     &= \beta^\top\,\Pi_{X_{p\perp}} \,C_{xx}\,\Pi_{X_{p\perp}} \beta   + \frac{\sigma^2}{n} \operatorname{tr}\left((\frac{1}{n}\,X_p^\top X_p)^\dagger\,C_{xx}\right)  + \sigma^2 . \label{eqn:final}
\end{align}

Note that the cross term in \eqref{eqn:vanish} does vanish since $\epsilon$ has zero mean conditioned on $X,$ and is independent of $x^*$. Note that \eqref{eqn:iidnoise} follows from the assumption that noise is i.i.d. 
Finally, \eqref{eqn:final} follows from the fact that:
\begin{equation}
  X_p^{\dagger} \, {X_p^{\dagger}}^\top = {(X_p^\top X_p)}^\dagger,
\end{equation}
which can be obtained by letting $X_p = \sum_{i=1}^k s_i \tilde{v}_i \tr{\tilde{u}}_i$ and observing $X_p^{\dagger} = \sum_{i=1}^k \frac{1}{s_i} \tilde{u}_i \tr{\tilde{v}}_i$.
\end{proof}

\section{Random Gaussian projections on isotropic data}
\label{supple:rand_proj}
Given the data matrix $X$ with rows $x_i\sim \mathcal N(0_p,I_p), \, i=1,\ldots, n$ and a random Gaussian projection matrix $\Pi = [w_1, \cdots, w_k] \in \R^{p \times k} $, where $w_{i} \stackrel{\text { i.i.d. }}{\sim} \mathcal{N}\left(0_p, p^{-1} I_{p}\right)$, \cite{Ba2020Generalization} investigated three cases and concluded the following:
\begin{itemize}
    \item Case 1: $\operatorname{rank}{(X \, \Pi)} = p < \min\{n,k\}$: this effectively reduced to the underparameterized case studied in \cite{hastie2020surprises}. In the limit of $n,p,k \to \infty$, the bias tends to zero and the variance tends to $\frac{p}{n-p} \sigma^2$.
    \item Case 2: $\operatorname{rank}{(X \, \Pi)} = k < n < p$: this is similar to PCA-OLS in the overparameterized regime. The bias is no longer zero, due to the dimensionality reduction, while the variance tends to $\frac{k}{n-k} \sigma^2$.
    \item Case 3: $\operatorname{rank}{(X \, \Pi)} = n < k < p$: in this overparameterized setting, the random projection lifts up the features to a higher-dimensional space. The risk decreases monotonically with $k$.
\end{itemize}

We focus on comparing PCA-OLS with random Gaussian projection in the overparameterized setting where $p > n$ (case 2, 3).

Let $\hat{\beta}_{R}$ be the random Gaussian projection method estimator, which is computed by $\hat{\beta}_R = \Pi (X \Pi)^{\dagger} Y$. Let $\| \beta \|^2 = r^2, \operatorname{Var}{(\epsilon)} = \sigma^2, \gamma_1 = p/n, \gamma_2 = k/n$.

\paragraph{Case 2:} When $k < n$, By Theorem 1 in \cite{Ba2020Generalization},
\begin{equation}
  \mathcal R(\hat{\beta}_{R} \mid X) \to \frac{\gamma_{1}-\gamma_{2}}{\gamma_{1} |\gamma_2 -1|} r^{2}+\frac{\gamma_{2}}{|\gamma_2 -1|} \sigma^{2} = \frac{p-k}{p (1-k/n)} r^2 + \frac{k}{n-k} \sigma^2 . \label{eqn:rand}
\end{equation}

From the bias upper bound of PCA-OLS in isotropic setting (\cite{wahl2019note} section 3.2.1) and our variance bound in Theorem \ref{thm.risk}, with high probability, there exists a constant $C >1$ such that:
\begin{equation}
  \mathcal R(\hat{\beta}_{\text{PCA}} \mid X) \le \frac{p-k}{p} r^2 + C \frac{k}{n} \sigma^2 . \label{eqn:pca-comp}
\end{equation}

Comparing \eqref{eqn:rand} and \eqref{eqn:pca-comp}, PCA-OLS has a smaller bias and potentially a smaller variance than random Gaussian projections. 

\paragraph{Case 3}: If $\gamma_2 \to \infty$, by Theorem 1 in \cite{Ba2020Generalization},
\begin{align}
  \mathcal R(\hat{\beta}_{R} \mid X) &\to   \frac{\gamma_{2} |\gamma_1 - 1|}{\gamma_{1} |\gamma_2 - 1|} r^{2}+\frac{|\gamma_1 - 1| + |\gamma_2 - 1|}{|\gamma_1 - 1| \, |\gamma_2 - 1|} \sigma^{2} \\
  & \stackrel{\gamma_2 \to \infty}{\to} \frac{|\gamma_1 - 1|}{\gamma_1 } r^2 + \frac{1}{ |\gamma_1 - 1|} \sigma^2 \\
  & = \mathcal R(\hat{\beta}_{\text{OLS}} \mid X) .
\end{align}
where the last equality follows from Theorem 2 in \cite{hastie2020surprises}. This shows that the risk of $\hat{\beta}_R$ is equivalent to the risk of OLS estimator. However, $ \mathcal R(\hat{\beta}_{R} \mid X) $ monotonically decreases with $k$ (on both bias and variance term). This shows $\hat{\beta}_R$ is strictly worse than OLS (i.e, PCA-OLS) for all $k$ when $k > n$, regardless of the signal-to-noise ratios.

We remark that there are many variants of random Gaussian projection. For example, \cite{slawski2018principal} showed that under stronger assumptions (i.e. the random projections are Johnson-Lindenstrauss transforms), the performance of random Gaussian projections are of the same order as PCA-OLS. 

To conclude, the random Gaussian projections in \cite{Ba2020Generalization} serve as a theoretical tool to analyze the double-descent risk curve, while in practice, stronger assumption are needed to improve its performance as a preprocessing method for regression. 

\section{Adversarial attacks}

\begin{proof}[Proof of Proposition \ref{prop:attack}]

Under the overparameterized regime where $p > n$, the $\hat{\beta}_{OLS}$ is given by equation \eqref{eqn: OLS-over}. Thus:
\begin{align}
    \hat{\beta}_{poison} &= \tr{\tilde{X}} (\tilde{X} \tr{\tilde{X}})^{-1} \tilde{Y} \\
    &= \tr{\tilde{X}}
    \begin{bmatrix}
    X \tr{X} & X x_0 \\
    \tr{x_0} \tr{X}  &  \tr{x_0} x_0
    \end{bmatrix}^{-1} \tilde{Y} \label{eqn: block} \\
    &=  \tr{\tilde{X}}
    \begin{bmatrix}
    f_1(\frac{1}{h}) & f_2(\frac{1}{h}) \\
    f_3(\frac{1}{h}) & \frac{1}{h}
    \end{bmatrix}
    \tilde{Y},
    \\
    h &:= \tr{x}_0 (I - \tr{X} (X \tr{X})^{-1} X) x_0, \label{eqn:attacker}
\end{align}
where $h$ is the square of the projection of $x_0$ onto the $p$-dimensional space orthogonal to the span of $X$ (i.e., the null space of $\tr{X}$), $f_1, f_2, f_3$ are linear functions in $\frac{1}{h}$. We assume the block matrix in equation \eqref{eqn: block} is invertible and thus $h \ne 0$. 

Now, we choose
\begin{equation}
    x_0 =  \frac{\epsilon}{\|\sum_{i=1}^n \alpha_i v_i + \delta \|} \Big(\sum_{i=1}^n \alpha_i v_i + \delta \Big),
\end{equation}
such that $\|x_0 \| \le \epsilon$. Let $\delta \to 0$, then \eqref{eqn:attacker} becomes 
\begin{equation}
 h =  \tr{x}_0 (I x_0 - \tr{X} (X \tr{X})^{-1} X x_0) \to \tr{x}_0 (x_0 - x_0) = 0   .
\end{equation}

Thus, $h$ is arbitrarily small and the risk grows to infinity, and the attack is immensely successful.

\end{proof}

In practice,  we choose $x_0$ be a (random) linear combination of the columns of $X$ plus a small noise, and then normalize it to have $\norm{x_0} = \epsilon$. Meanwhile, $y_0$ can be chosen arbitrarily as $x_0$ drives the success of the attack. 

\begin{proof}[Proof of Proposition \ref{prop:ols-under}]
In the underparameterized regime ($p < n$), the data-poisoning attack becomes:
\begin{align}
    \hat{\beta}_{\text{poison}} &= (\tr{\tilde{X}} \tilde{X})^{-1} \tr{\tilde{X}} \tilde{Y} \\
    &= (\tr{X} X+  x_0 \tr{x_0})^{-1} \tr{\tilde{X}} \tilde{Y}. 
\end{align}
Here, $\tr{X} X$ is full rank, in contrast to the low rank matrix $X \tr{X}$ in the overparameterized setting. The attack effectively adds a rank-1 matrix $x_0 \tr{x_0}$ with $\norm{x_0}^2 \le \epsilon^2$. If the smallest eigenvalue of $\tr{X} X$ is less than $\epsilon$, then the attack can push the smallest eigenvalue of $\tr{X} X + x_0 \tr{x_0}$ infinitely close to $0$, making the risk of OLS grow arbitrarily. 
\end{proof}

\begin{proof}[Proof (sketch) of Corolary \ref{cor:pca-attack}]
PCA-OLS first projects the features $X \in \R^{n \times p}$ to a low-dimensional space $\R^{n \times k}$, and then perform OLS on a rank-$k$ approximation of the features, $X_{\PCA, k}$. Given $k < \min{\{n,p\}}$, PCA-OLS is effectively in the underparameterized regime. The smallest eigenvalue of $X_{\PCA, k}$ is the $k$-th largest eigenvalue of $\tr{X} X$. Thus, if $\lambda_k (\tr{X} X) \gg \epsilon$, the attack has minimal effect in changing the smallest eigenvalue of $X_{\PCA, k}$, and the risk of PCA-OLS under attack will not deviate much from the original risk.
\end{proof}

\section{Further numerical experiments}\label{supp:experiments}

\begin{figure}[t!]
    \centering
    \includegraphics[width=2\panelwidth]{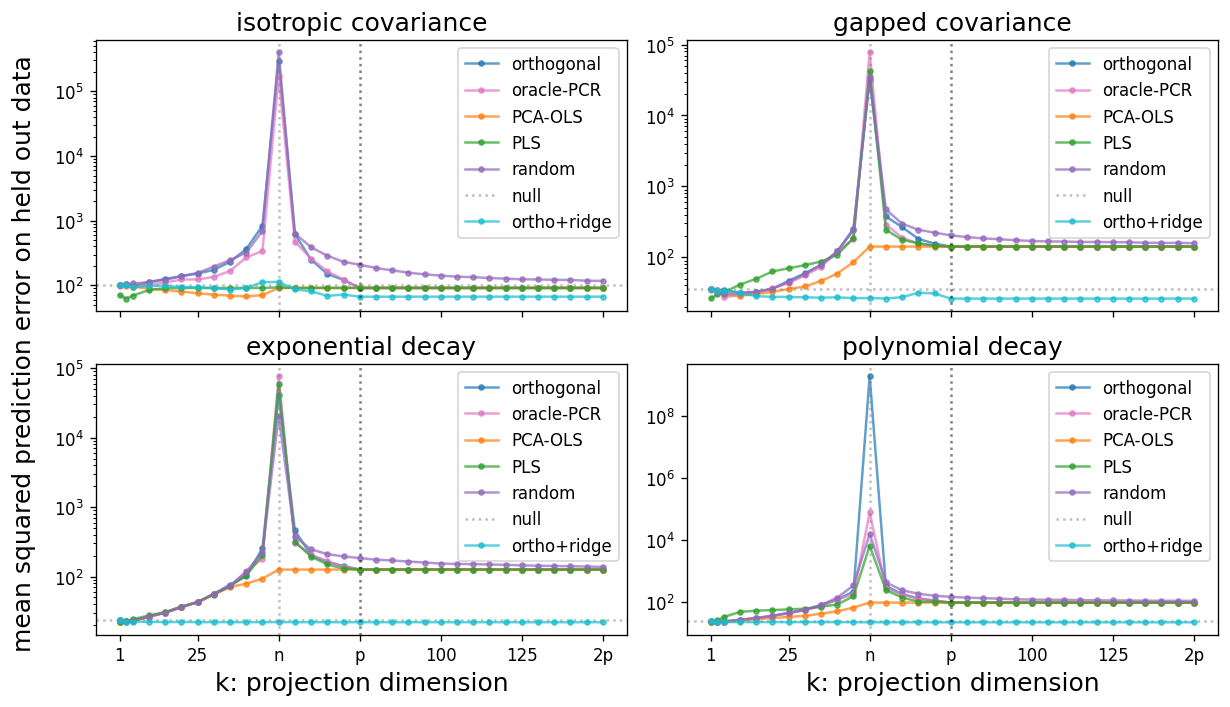}
     \caption{Low signal-to-noise (SNR=2) setting with different covariance structures.}
    \label{fig:lowsnr}
    \centering
    \includegraphics[width=2\panelwidth]{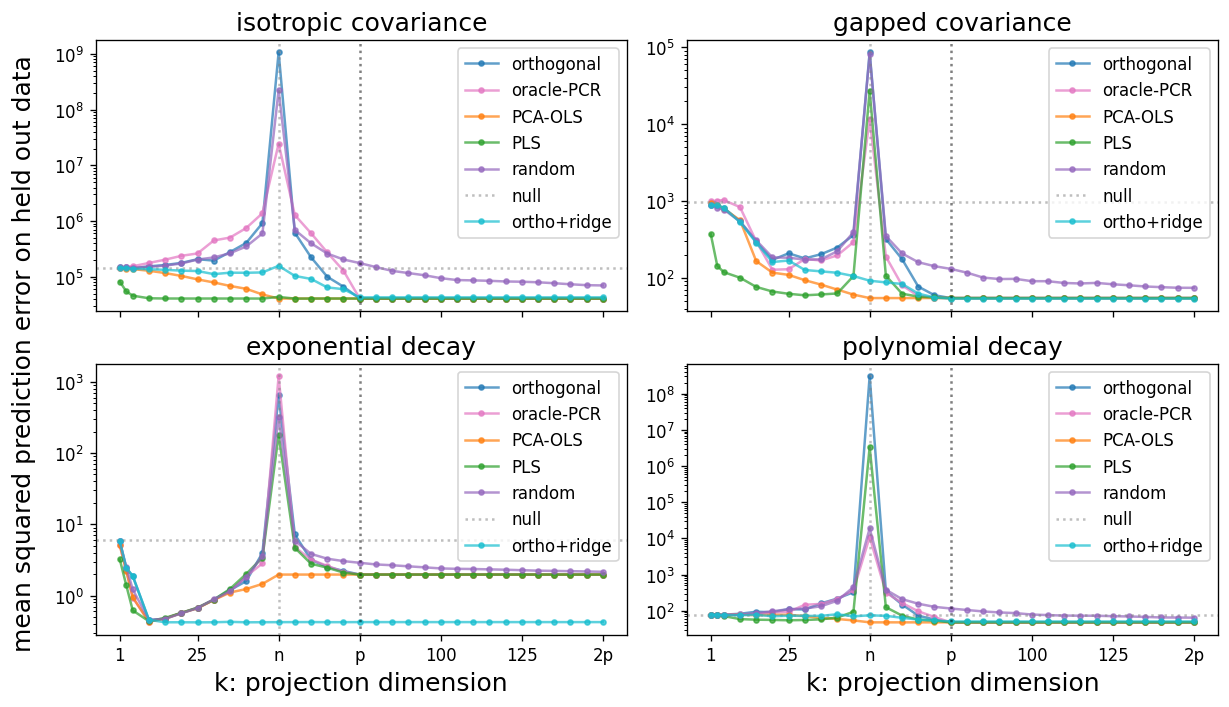}
     \caption{Misalignment setting with the coefficient of $\beta$ is inversely related to the eigenvalues in $C_{xx}$. }
    \label{fig:beta_rev}
    \centering
    \includegraphics[width=2\panelwidth]{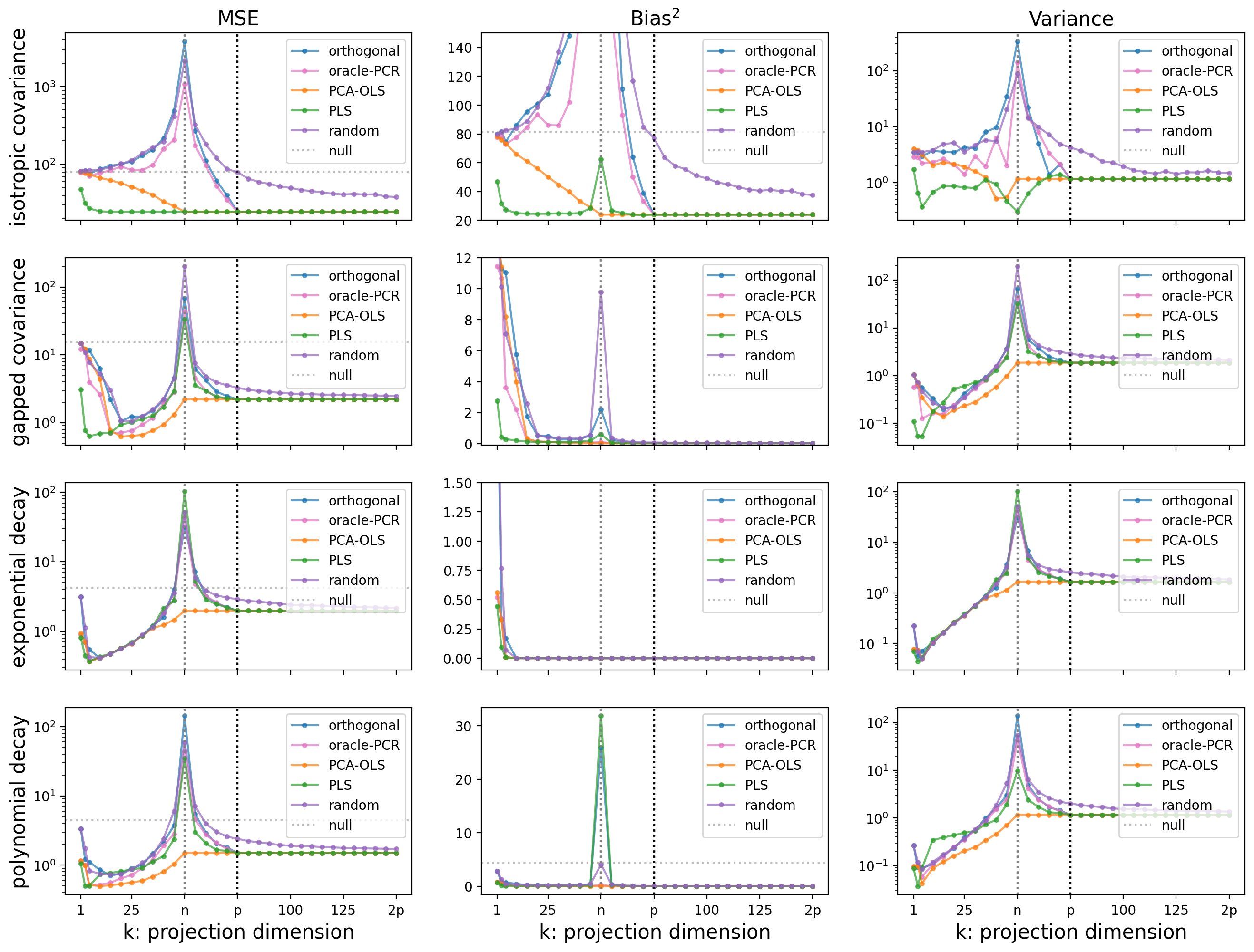}
     \caption{High signal-to-noise (SNR=16) setting with different covariance structures and bias-variance decomposition for selected methods.}
    \label{fig:bvtradeoff}
\end{figure}

In a low SNR setting (Figure \ref{fig:lowsnr}), all the methods perform worse, as expected. In particular, the performance of PLS deteriorates drastically with larger $k$, under the isotropic covariance model and gapped covariance model. This suggests the higher sensitivity of PLS to the signal-to-noise effect. In comparison, PCA-OLS seems to be more robust, as the shape of its risk curves does not change significantly.

In Figure \ref{fig:beta_rev} we study the impact of the alignment of the signal of $\beta$ and the principal components in $C_{xx}$, by letting $\beta = [1, 2, \cdots, p-1, p]$. In other words, large coefficients of $\beta$ are aligned with principal components of $C_{xx}$ with small eigenvalues. In this case, we correct the SNR ratio to keep the same noise variance $\sigma^2$ as in the high SNR setting (shown in Figure \ref{fig:highsnr}). In this misalignment setting, the results for the gapped covariance model changed the most, compared to true $\beta$ that distributes even weights to all PCs (i.e., Figure \ref{fig:highsnr}). As shown in Figure \ref{fig:beta_rev}, PCA-OLS achieves the lowest MSE at $k=n$. A much larger $k$ is needed as the signals concentrate in the principal components with small eigenvalues. Comparatively, PLS reaches the lowest MSE at a smaller $k$ than PCA-OLS. This illuminates the choice of $k$ depends on both the data covariance as well as its alignment with signals on $\beta$.
 
Similar to \cite{wu2020optimal}, we also observe that the risk of oracle-PCR decreases with $k$ in the overparameterized regime, for both the aligned and misaligned settings.

Finally, we analyze the bias and variance terms for different methods in Figure \ref{fig:bvtradeoff} \footnote{Excluding the ridge regularized orthogonal projection method, as the standard bias variance decomposition of OLS does not apply.}. The bias term is computed by:
\begin{equation}
    \mathbb B = \beta^\top\,\Pi_{X_{p\perp}} \,C_{xx}\,\Pi_{X_{p\perp}} \beta  ,
\end{equation}
where $\Pi_{X_{p\perp}} = I -  \Pi (X \Pi)^{\dagger} X $, following the derivation in Supplementary \ref{supple:bv-decomp}. Then we compute the variance by subtracting $ \mathbb B $ and $\sigma^2$ from MSE. Note that this is only an approximation of the true bias and variance component (as the MSE is averaged over Monte-carlo simulations, not the exact risk). As shown in Figure \ref{fig:bvtradeoff}, for most cases: the bias-variance trade-off appears for $k<n$; while both bias and variance monotonically decrease for $k>n$. Note that the bias of PCA-OLS is large with small $k$ for the isotropic covariance model (but even larger for other projection methods except PLS). On the other hand, with eigenvalue decays (row 2-4), PCA-OLS achieves low bias, and does not suffer from high variance.

\end{document}